%% file: main.tex
\documentclass[conference]{IEEEtran}
\IEEEoverridecommandlockouts

\usepackage[utf8]{inputenc}
\usepackage[T1]{fontenc}
\usepackage{cite}
\usepackage{amsmath,amsfonts,mathtools}

\usepackage{amssymb}
\usepackage{amsthm}
\usepackage{graphicx}
\usepackage{textcomp}
\usepackage{xcolor}

\usepackage{booktabs}
\usepackage{longtable} %
\usepackage{multirow}
\usepackage{makecell}
\usepackage{diagbox}
\usepackage{rotating}
\usepackage{nicefrac}
\usepackage{textgreek}
\usepackage{comment}
\usepackage{listings}
\usepackage{tcolorbox}
\usepackage{mfirstuc}
\usepackage{xspace}
\usepackage[ruled,vlined,linesnumbered]{algorithm2e}
\usepackage{algorithmic}

\usepackage[disable]{todonotes}

\usepackage{caption}
\usepackage{subcaption}

\usepackage[hidelinks]{hyperref}
\usepackage[nameinlink,capitalize]{cleveref}

\def\BibTeX{{\rm B\kern-.05em{\sc i\kern-.025em b}\kern-.08em
    T\kern-.1667em\lower.7ex\hbox{E}\kern-.125emX}}

\theoremstyle{plain}
\newtheorem{theorem}{Theorem}[section]
\newtheorem{proposition}[theorem]{Proposition}

\theoremstyle{definition}

\newtheorem{assumption}[theorem]{Assumption}
\theoremstyle{remark}

\input{macro}

\crefformat{section}{§#2#1#3}

\providecommand{\citep}[1]{\cite{#1}}
\providecommand{\citet}[1]{\cite{#1}}

\title{\name{}: Protecting Dataset-Level Secrets in \\Textual Data Sharing}

\author{
\IEEEauthorblockN{Shuaiqi Wang\IEEEauthorrefmark{1},
Zinan Lin\IEEEauthorrefmark{2}, and
Giulia Fanti\IEEEauthorrefmark{1}}
\IEEEauthorblockA{\IEEEauthorrefmark{1}Carnegie Mellon University \quad
\IEEEauthorrefmark{2}Microsoft Research\\
\{shuaiqiw,gfanti\}@andrew.cmu.edu \quad zinanlin@microsoft.com}
}

\begin{document}

\maketitle

\input{sections/abstract}
\input{sections/intro}
\input{sections/related_work}

\input{sections/formulation}
\input{sections/method}

\input{sections/experiment}

\input{sections/conclusion}

\bibliographystyle{IEEEtran}
\input{references.bbl}

\appendices
\input{sections/app_candidate_distribution}
\input{sections/app_proof_sml}

\end{document}

%% file: macro.tex
\crefname{section}{\S}{\S}
\Crefname{section}{\S}{\S}
\crefname{appendix}{App.}{Apps.}
\Crefname{appendix}{App.}{Apps.}
\crefname{theorem}{Thm.}{Thms.}
\Crefname{theorem}{Thm.}{Thms.}
\crefname{proposition}{Prop.}{Props.}
\Crefname{proposition}{Prop.}{Props.}
\crefname{algorithm}{Alg.}{Algs.}
\Crefname{algorithm}{Alg.}{Algs.}
\crefname{assumption}{Asm.}{Asms.}
\Crefname{assumption}{Asm.}{Asms.}
\crefname{mechanism}{Mech.}{Mechs.}
\Crefname{mechanism}{Mech.}{Mechs.}

\newcounter{packednmbr}

\newcommand{\calD}{\mathcal{D}}

\newcommand{\calM}{\mathcal{M}}

\newcommand{\calX}{\mathcal{X}}
\newcommand{\calY}{\mathcal{Y}}
\newcommand{\calZ}{\mathcal{Z}}

\newcommand{\bra}[1]{\left( #1 \right)}
\newcommand{\brb}[1]{\left[ #1 \right]}

\newcommand{\brc}[1]{\left\{ #1 \right\}}
\newcommand{\brd}[1]{\left| #1 \right|}

\newcommand{\probnotation}{\mathbb{P}}
\newcommand{\probof}[1]{\probnotation\bra{#1}}

\newcommand{\secretrv}{g}
\newcommand{\secretvalueset}{\mathbf{G}}

\newcommand{\prop}{\phi}
\newcommand{\propsize}{\psi}

\newcommand{\content}{Y}
\newcommand{\style}{Z}
\newcommand{\contentset}{\mathcal{Y}}
\newcommand{\styleset}{\mathcal{Z}}
\newcommand{\influence}{L}

\newcommand{\fid}{FID}
\newcommand{\knnprecision}{KNN-Precision}
\newcommand{\knnrecall}{KNN-Recall}
\newcommand{\am}{AM}
\usepackage{bbm}

\newcommand{\name}{QuanText\xspace}

%% file: sections/abstract.tex
\begin{abstract}
Natural-language datasets support many downstream applications and research studies. However, released text can reveal sensitive \emph{global} properties of the data source, such as the proportion of records associated with a particular gender, diagnosis, or political stance. Existing work has largely focused on property inference attacks that recover such global properties, while defenses for protecting these dataset-level secrets remain limited. Differential privacy, though effective for protecting individual records, provides only weak protection for aggregate properties. In this work, we propose Randomized Quantization for Text (\name{}), a training-free and large-language-model-agnostic data release mechanism that protects global secrets in textual datasets while preserving data utility. Given {a function of the dataset that represents the secret in question (e.g. proportion of diabetics) and} attributes over which the data holder wants to retain utility {(e.g., topic and sentiment)}, \name{} distorts the distribution of the secret quantity \emph{and} the distributions of correlated attributes by (i) constructing a set of candidate release distributions {over attributes (both secret and non-secret)}, (ii) randomly selecting a distribution that is {sufficiently} close to the (private) empirical distribution, and (iii) rewriting each private text sample to match the chosen {attribute} distribution using attribute-related snippets from the original text. %
\name is inspired by the Statistic Maximal Leakage (SML) framework, which bounds leakage about a secret function of a data distribution. 
We show that under ideal conditions, \name theoretically satisfies an SML guarantee. 
Since these ideal conditions may not hold in practice, 
experiments on real-world datasets demonstrate that \name{} achieves a better empirical privacy–utility trade-off than competing data generation baselines. The code for \name{} is available at \url{https://github.com/wsqwsq/QuanText}.
\end{abstract}

\begin{IEEEkeywords}
Dataset-level privacy, data sharing, Statistic Maximal Leakage, Randomized Quantization.
\end{IEEEkeywords}

%% file: sections/intro.tex
\section{Introduction}

Natural-language datasets are important for both practical applications and academic research~\cite{bender2018data,gebru2021datasheets}. However, their release also raises significant privacy concerns. Beyond the leakage of sensitive information from individual samples~\cite{carlini2021extracting,lukas2023analyzing}, a released dataset may reveal sensitive global properties~\cite{hilprecht2019_property_gans,huang2025can,wang2024statistic}, such as the proportions of samples associated with certain attribute categories. For example, in medical records or patient--doctor dialogues~\cite{li2023chatdoctor}, even after individual identifiers have been removed, the released corpus may expose the fraction of female patients or the prevalence of specific diagnostic categories, revealing sensitive population-level demographic or health information. %
Similarly, in social-media datasets~\cite{StanceSemEval2016}, released text may reveal politically sensitive aggregate information, such as the proportion of posts supporting a controversial topic. Protecting such dataset-level properties can be important for textual data sharing, while the released data should still preserve realistic semantics to remain useful (\cref{fig:problem}). %

\begin{figure}[htbp]
    \centering
    \includegraphics[width=0.94\linewidth]{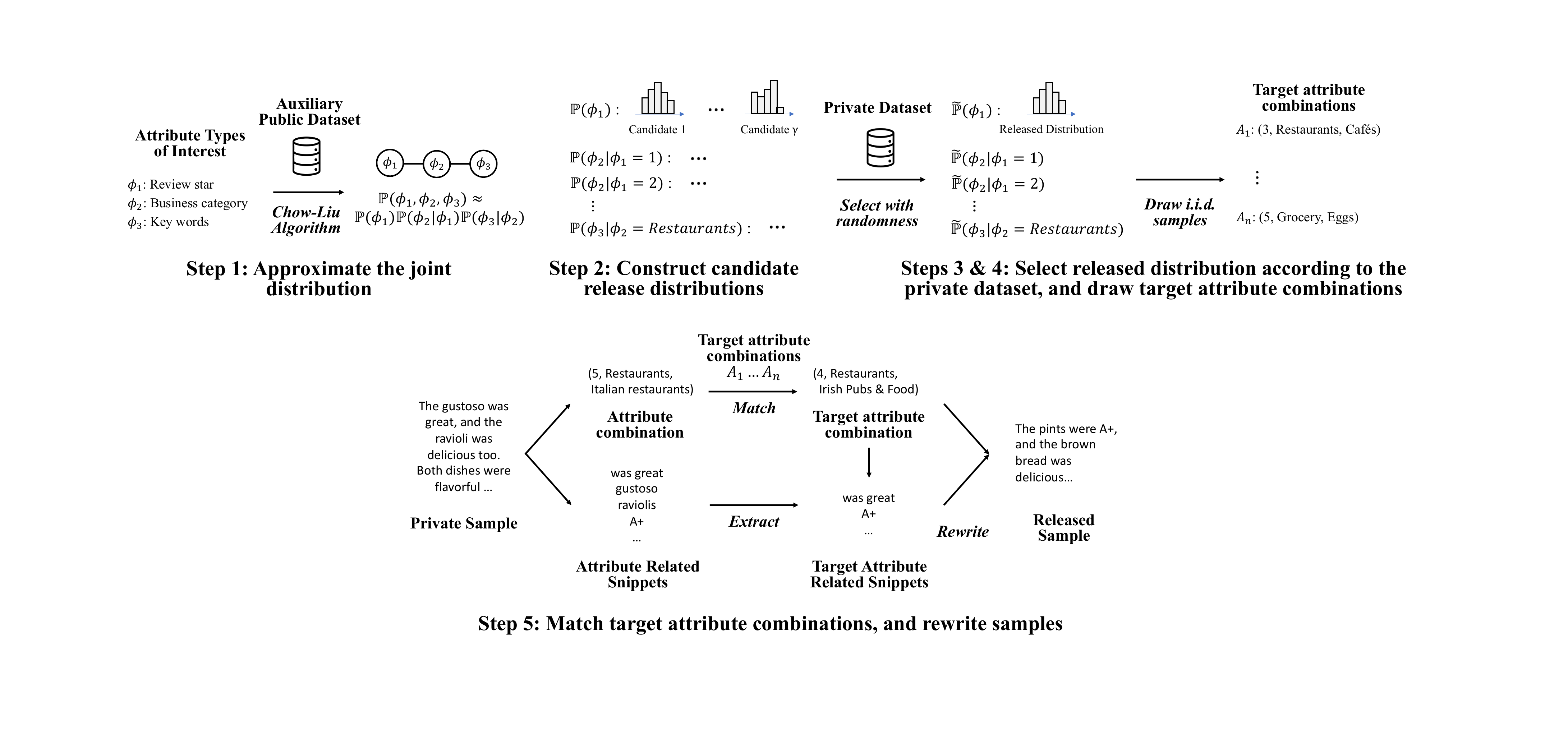}
    \caption{Textual data sharing can reveal sensitive global properties, such as the proportion of patients with a certain disease. We aim to design a data release mechanism that protects such global secrets while preserving the utility of released text. %
    }
    \label{fig:problem}
\end{figure}

Several studies have proposed property inference attacks that infer global properties of released data or of the generative models used in the release process~\cite{hilprecht2019_property_gans,prisampler2024,wei2024_property_existence_usenix,huang2025can}. However,  comparatively few works focus on defenses that protect such sensitive global properties~\cite{prisampler2024}. Prior studies have shown that differential privacy (DP), while effective for protecting individual samples, can be insufficient for protecting aggregate properties~\cite{ateniese2013_hacking_smart_machines,suri2023_dissecting,mothukuri2022_distribution_leakage_cross_silo,cretu2024_correlation_inference_sciadv}. 
Intuitively, DP perturbs individual samples independently, but because the added noise is typically zero-mean, its effects largely cancel out in aggregate, leaving the proportions of the sensitive attribute categories close to their original values. %
\cite{prisampler2024} proposes a simple defense that distorts the secret values in released data in the vision setting. However, this approach remains insufficient when other attributes in the dataset are correlated with the secret, since attackers can exploit these correlations to recover the sensitive global property. Designing defenses in textual settings is similarly challenging: first, a single attribute may be expressed across multiple parts of a text; second, textual attributes are often highly correlated, allowing attackers to recover the global secret from seemingly non-sensitive words, phrases, or semantic cues.

To this end, we propose Randomized Quantization for Text (\name{}), a training-free and large-language-model-agnostic data release mechanism that protects global {secret proportions of a dataset} while limiting leakage through correlated attributes and preserving the utility of released data. At a high level, \name{} proceeds in three stages. 
(i) First, \name{} efficiently constructs a set of candidate release distributions over the specified attribute types the data holder wants to retain utility on. For example, a user with a dataset of tweets may wish to preserve the distributions of topics and sentiments, both of which may be correlated with the target secret: the proportion of tweets favoring a particular political stance. Since directly releasing the private data (or even the distribution of these attributes) may leak the secret through their correlations, \name{} instead builds alternative candidate distributions with different secret and correlated-attribute marginal distributions. In our example, each candidate distribution 
{specifies a joint distribution over topics and sentiments.}
To construct these candidates, \name{} uses an auxiliary public dataset: it applies the Chow--Liu algorithm~\cite{chow1968approximating} to approximate the joint distribution {of attributes}
as a product {distribution} of simpler marginal and conditional distributions. 
(ii) Second, given the private dataset, \name{} determines the released joint distribution by randomly selecting from the subset of candidate distributions closest to the private empirical distribution. (iii) Third, \name{} constructs a released dataset that follows the selected attribute distribution. For each private sample, an attribute combination is assigned according to the release distribution, and a large language model rewrites the sample using the assigned attributes together with the corresponding attribute-relevant snippets extracted from the original text. For example, suppose a private tweet has the attributes $(\text{topic}=\text{climate change}, \text{sentiment}=\text{positive})$ and is assigned a synthetic attribute combination $(\text{topic}=\text{climate change}, \text{sentiment}=\text{negative})$. \name{} extracts relevant snippets from the original tweet, such as ``global warming'' and ``clean energy'', and uses them to generate a new tweet that expresses the assigned topic and sentiment.

Under idealized conditions, we prove that \name{} satisfies a privacy guarantee based on the Statistic Maximal Leakage (SML) framework~\cite{wang2024statistic}. 
SML bounds the leakage of a specified global-level secret under the worst-case data prior and against arbitrary attack strategies, including those that exploit correlated attributes. 
Since SML is robust to post-processing \cite{wang2024statistic}, a dataset released under a $\Pi-$SML guarantee can be used for downstream tasks like model training without degrading privacy.

Our theoretical analysis accounts for correlation between textual content and style, such as word choice {within individual text samples} or text length. We quantify content--style dependence by a worst-case conditional probability ratio $l$, and prove that the SML is bounded by the mechanism-dependent quantization term plus $\log l$, so weak dependence between content and style causes only limited additional privacy loss.

Since the correlation parameter is difficult to estimate in practice, 
we compare the empirical privacy and utility performance of \name
to several common data release baselines. 
Our results show that \name{} achieves a better empirical privacy–utility trade-off than competing data generation baselines on known property inference attacks.

Our contributions are summarized as follows.

\begin{itemize}
\item \textbf{Mechanism Design:} We propose \name{}, a model-agnostic, training-free data release mechanism that protects sensitive global {secret proportions of a dataset} while
limiting leakage through correlated attributes and preserving
data utility.  

\item \textbf{Privacy Analysis:} We analyze the idealized privacy guarantee of \name{} via Statistic Maximal Leakage (SML), which is robust to arbitrary attack strategies and further processing of the released data. We provide an SML guarantee whose additive degradation is controlled by the strength of content--style correlation.

\item \textbf{Empirical Evaluation:} We evaluate the privacy and utility performance of \name{} on real-world datasets. Compared with existing data generation baselines, \name{} achieves a better privacy–utility trade-off.
\end{itemize}

%% file: sections/related_work.tex
\section{Related Work}

\noindent\textbf{Attribute Inference Attacks.~}
Attribute inference attacks aim to recover a missing or sensitive attribute of an individual sample from its observed features, thus concern \emph{sample-level privacy}~\cite{fredrikson2015model,zhao2021feasibility,mehnaz2022your,jayaraman2022attribute,duddu2022inferring}. Existing studies typically assume access to a trained classifier. Our setting is fundamentally different. We study \emph{dataset} or \emph{distribution-level privacy}, where the sensitive information is a global property of the private dataset, such as the proportion of samples with a particular attribute.

\noindent\textbf{Property  Inference Attacks.~} 
Property inference attacks, also called distribution inference attacks, aim to infer aggregate properties of a private training dataset or distribution, such as demographic or class-label proportions, rather than attributes of individual records. Most prior work~\cite{snap_sp23_chaudhari2023,ateniese2013_hacking_smart_machines,ganju2018_property_inference,sone2022_formalizing_distribution_risks,zhang2021_leakage_multiparty,distribution_inference_risks_2022,suri2023_dissecting,mothukuri2022_distribution_leakage_cross_silo} targets discriminative classifiers, including fully connected neural networks~\cite{ganju2018_property_inference}, convolutional neural networks~\cite{sone2022_formalizing_distribution_risks}, and classifiers in federated learning~\cite{mothukuri2022_distribution_leakage_cross_silo}. This line of work also shows that differential privacy (DP), while designed to protect individual records, provides only weak protection for aggregate statistics~\cite{ateniese2013_hacking_smart_machines,suri2023_dissecting,mothukuri2022_distribution_leakage_cross_silo}. \cite{cretu2024_correlation_inference_sciadv} similarly finds DP may leave attribute correlations vulnerable in classifier-based settings.

A smaller body of work studies property inference for generative models or their synthetic outputs~\cite{hilprecht2019_property_gans,prisampler2024,wei2024_property_existence_usenix,huang2025can}. For Generative Adversarial Networks~\cite{hilprecht2019_property_gans} and diffusion models~\cite{prisampler2024}, attackers typically observe generated samples and estimate the target property empirically, assuming the generative distribution reflects the private training distribution. \cite{wei2024_property_existence_usenix} studies the narrower task of property existence inference, which tests whether a target property appears in the training data, i.e., whether its proportion is nonzero, rather than estimating general proportions. More recently, \cite{huang2025can} studies property inference attacks against large language models in both black-box settings, where attackers label generated samples to estimate the secret, and gray-box settings, where attackers use model weights and auxiliary data to train shadow models that map model features to the secret.
Our setting is closest to property inference from generated data; however, while prior work primarily studies attacks, our work focuses on defense.

\noindent\textbf{Defenses Against Property Inference Attacks.~}
Existing defenses against property inference attacks mainly target classifiers, using techniques such as property unlearning and adversarial training~\cite{lessons_learned_pia_defenses,pi_regression_2023,distribution_inference_risks_2022}. These heuristic methods are model-dependent and require modifying the training procedure. In contrast, we consider a model-agnostic, training-free defense at the data-release stage.
\cite{prisampler2024} perturbs the sensitive property in generated data, but changing the secret alone is insufficient: an attacker can still recover it from other attributes correlated with it. Our work addresses this limitation by protecting fine-grained dataset-level proportions while limiting leakage through correlated attributes.

%% file: sections/formulation.tex
\section{Problem Formulation}

The data holder has a private textual dataset $\mathcal{D}=\{x_1, \ldots, x_n\}$ of size $n$, where each sample $x_i\in \calX$ is a text string from universe $\calX$. 
We assume there exists a set of
$m$ attribute categories of interest, denoted by $\brc{\prop_i}_{i=1}^m$ (e.g., $\phi_1=$ sentiment, $\phi_2=$ stance), which are known to the data holder and end users. 
Each attribute type $\prop_i$, such as sentiment, takes values from a predefined set of categories with size ${\propsize_i}$, %
denoted by $\brc{v_j^i}_{j=1}^{{\propsize_i}}$, such as positive, negative, and neutral. 
The data holder aims to release a dataset $\mathcal{D'}$ via a data generation mechanism {$\calD' = \calM(\calD)$} while protecting a {count query $G(\calD)$ measuring the proportion of samples associated with a particular (combination of) attribute category values. }
That is, suppose there exists an oracle $\mathfrak{g}:\calX \to \{0,1\}$, that outputs 1 if and only if the sample $x$ has a certain (combination of) attribute properties (e.g., ``male with diabetes''), and 0 otherwise. Then
$$
G(\calD) \triangleq \frac{1}{n}\sum_{x \in \calD} \mathfrak{g}(x).
$$
For brevity, we will use the shorthand $G$ to denote $G(\calD)$. 
{A core assumption of this work is that the secret $G$ can be %
recovered by knowing the values of all attributes $\{\phi_i\}_{i=1}^m$; 
even if the oracle $\mathfrak g$ depends only on a strict subset of the $m$ attribute categories, the secret $G$ may still be correlated with other attributes that are not evaluated by $\mathfrak g$.
We model leakage about the secret from non-attribute content in \cref{sec:theory}.}
Following \cite{huang2025can}, the dataset release mechanism may use a public auxiliary dataset $\mathcal{D}_{\text{aux}}$ from a similar domain.

After observing the released dataset, the attacker aims to infer the original secret $G$. %
We assume that the attacker knows %
the data generation mechanism and is computationally unbounded.

\paragraph{Privacy Constraint}
Theoretically, we measure the privacy of a data release mechanism $\mathcal M$ in the Statistic Maximal Leakage (SML) framework \cite{wang2024statistic}, which provides privacy guarantees under any data prior 
and against arbitrary attack strategies. Let $\mathcal{P}$ denote the prior distribution of data, %
$\mathcal{A}$ be the attack method, $G$ and $\hat{G}$ be the random variables representing the original and attacker-guessed secret values, and $\secretvalueset$ be the set of all possible secret values. SML is defined as
\begin{align}
\Pi_{\mathcal{M}, \mathfrak{g}} = \sup_{\mathcal{P}, \mathcal{A}}\log \frac{\probof{\hat{G}=G}}{\sup_{\secretrv\in \secretvalueset} \mathbb{P}_G\bra{\secretrv}}.
\label{eq:privmetric}
\end{align}

SML takes the worst-case leakage over all possible data priors and attack methods. Intuitively, SML measures the gain in the attacker's probability of correctly guessing the secret given the released dataset $\calD'$. A smaller value of $\Pi_{\mathcal{M}, \mathfrak{g}}$ indicates stronger protection; in particular, if $\Pi_{\mathcal{M}, \mathfrak{g}}\le \eta$, any attack success probability after release is at most $\alpha e^\eta$, where $\alpha$ is the success probability using prior knowledge alone. SML also satisfies post-processing and adaptive composition \cite{wang2024statistic}, making the privacy guarantee robust to further processing of the released output and sequential applications of data release mechanisms.

%% file: sections/method.tex
\section{Randomized Quantization for Text}

We design Randomized Quantization for Text (\name{}) as a data release mechanism that limits leakage about the 
secret quantity $G$
while preserving the utility of the attribute types of interest, $\brc{\prop_i}_{i=1}^m$. 

\paragraph{Straw Man Solution}
A straw-man design that simply removes samples containing the sensitive property (e.g., ``has diabetes'') is insufficient, because an attacker may still infer the secret from the distributions of correlated attributes (e.g., age), which may remain largely unchanged. To mitigate this risk, \name{} perturbs both the distribution of the sensitive property \emph{and} the distributions of attributes correlated with it.

\paragraph{Overview}
{Our core assumption is that the secret $G$ is revealed exactly by knowing the attribute values of all samples in the private dataset. 
Hence, our approach is to \emph{rewrite} samples in the dataset so that their attribute values, in aggregate, do not reveal the secret $G$. 
\name  extracts attributes values from each text sample in $\calD$, obtaining an \emph{empirical joint distribution} $\hat {\mathbb{P}}$ over these attribute values.
It also constructs a set of alternative joint distributions over attribute values. 
To do so, it models the underlying joint attribute distribution with a tree-structured probabilistic graphical model using the Chow-Liu algorithm \cite{chow1968approximating}.
Given a set of candidate distributions (or a \emph{quantization} of the attribute distribution space), \name selects uniformly at random from the set of candidate distributions that are closest in total variation distance to the empirical attribute distribution; the size of this subset can be varied to tune privacy guarantees. 
Once we draw an alternative attribute distribution $\tilde{\mathbb{P}}$, we use an LLM to rewrite each sample in $\calD$, mapping its attribute values to a synthetic combination of attribute values drawn from $\tilde{\mathbb{P}}$. 
To minimize the changes to the dataset, we use the Hungarian algorithm to find a low-cost matching between  samples' true attributes to a list of synthetic attributes; here, cost is defined as the number of differing attribute values.
}

\subsection{\name Algorithm}
We illustrate the pipeline of \name{} in \cref{fig:illu} and present the full algorithm in \cref{alg:main}.

\begin{figure*}[htbp]
    \centering
    \includegraphics[width=\linewidth]{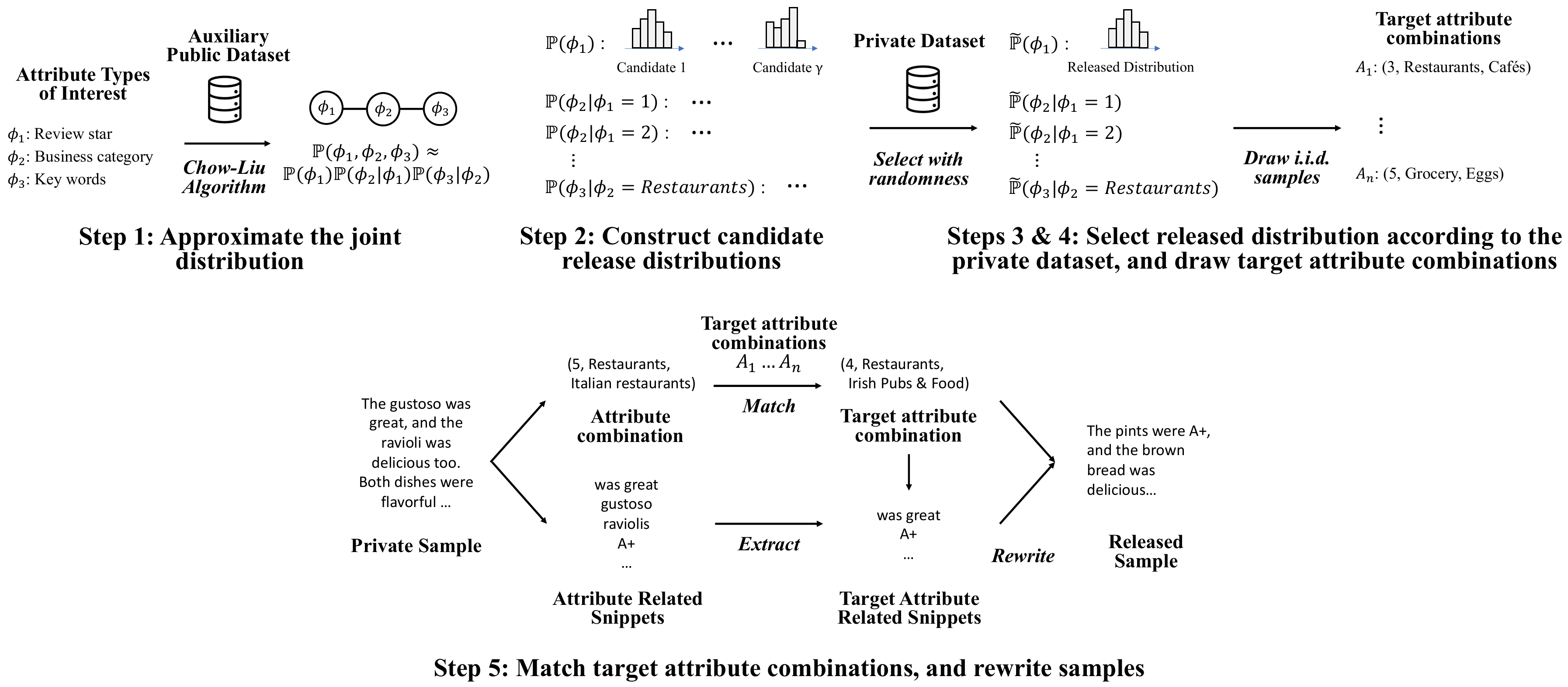}
    \caption{Illustration of \name{} on a customer review dataset, where the attribute types of interest are review star, business category, and key words. Step 1 approximates the joint attribute distribution with a Chow--Liu tree learned from an auxiliary public dataset. Step 2 constructs candidate release distributions for each distribution component. Steps 3 and 4 randomly select released distributions based on the private dataset and then sample target attribute combinations. Step 5 matches these target combinations to private samples, extracts snippets relevant to the assigned target attributes, and rewrites the samples accordingly to form the released dataset.}
    \label{fig:illu}
    \vspace{-3mm}
\end{figure*}

\begin{algorithm}[htbp]
\small
\DontPrintSemicolon
\SetAlgoLined
\SetKwInput{KwIn}{Input}
\SetKwInput{KwOut}{Output}

\KwIn{
Private dataset $\mathcal{D}=\brc{x_1,\ldots,x_n}$, auxiliary dataset $\mathcal{D}_{\mathrm{aux}}$,
attribute types of interest $\brc{\prop_1,\ldots,\prop_m}$,
number of candidate distributions $\gamma$,
selection size $k$, large language model $f$.
}
\KwOut{Released dataset $\mathcal{D}'$.}

\BlankLine
\tcp{Step 1: Chow--Liu approximation}
$\probof{\prop_1,\ldots,\prop_m}\approx \probof{\prop_1}\cdot\prod_{i=2}^{m}\probof{\prop_i|\prop_{\pi\bra{i}}}\leftarrow$ \textsc{ChowLiu} $(\mathcal{D}_{\mathrm{aux}})$.\; \label{alg:chowliu}

\BlankLine
\tcp{Step 2--4: Decide released distribution}
\For{each distribution component $\mathbb{P}\in
\brc{\mathbb{P}(\prop_1)}
\cup
\brc{\mathbb{P}(\prop_i\mid \prop_{\pi\bra{i}}=v^{\pi\bra{i}}_j):
i=2,\ldots,m,\ j\in\brb{\propsize_{\pi\bra{i}}}}$}{
$\mathcal{Q}\gets\textsc{CandidateConstruction}(\gamma,\mathbb{P})$ [\cref{alg:candidate_distribution}].\; \label{alg:step2}

Compute the empirical distribution $\hat{\mathbb{P}}$ from $\mathcal{D}$.\; \label{alg:step3}

$\mathcal{Q}_k \leftarrow \arg\min{\substack{\mathcal{Q}'\subseteq\mathcal{Q}: |\mathcal{Q}'|=k}} \sum_{q\in\mathcal{Q}'} d_{\mathrm{TV}}(q,\hat{\mathbb{P}})$.\; \label{alg:step4-1a}

$\tilde{\mathbb{P}}\gets\textsc{Uniform}(\mathcal{Q}_k)$.\; \label{alg:step4-1b}

}

\BlankLine
$\tilde{\mathbb{P}}(\prop_1,\ldots,\prop_m)
\gets
\tilde{\mathbb{P}}(\prop_1)\cdot
\prod_{i=2}^{m}\tilde{\mathbb{P}}(\prop_i\mid\prop_{\pi\bra{i}})$.\; \label{alg:step4-2}

Draw target attribute combinations
$\brc{\bra{\tilde{a}^1_j,\ldots,\tilde{a}^m_j}}_{j=1}^{n}
\overset{\mathrm{i.i.d.}}{\sim}
\tilde{\mathbb{P}}(\prop_1,\ldots,\prop_m)$.\; \label{alg:step4-3}

\BlankLine
\tcp{Step 5: Match and rewrite samples}
\For{$i\gets 1$ \KwTo $n$}{
$\bra{\hat{a}^1_i,\ldots,\hat{a}^m_i}\gets\textsc{LabelAttributes}_f(x_i)$.\; \label{alg:step5-1}
}

Set $C_{ij}\gets\sum_{r=1}^{m}\mathbbm{1}\left[\hat{a}^r_i\ne\tilde{a}^{r}_j\right]$ 
for all $i,j\in\brc{1,\ldots,n}$.\; \label{alg:step5-12}

$\sigma\gets\textsc{Hungarian}(C)$, where $\sigma(i)$ is the target combination assigned to $x_i$.\; \label{alg:step5-13}

\For{$i\gets 1$ \KwTo $n$}{
$s_i\gets\textsc{ExtractSnippets}_f(x_i,\bra{\tilde{a}^1_{\sigma(i)},\ldots,\tilde{a}^m_{\sigma(i)}})$.\; \label{alg:step5-2}

$x'_i\gets\textsc{Rewrite}_f(x_i,s_i,\bra{\tilde{a}^1_{\sigma(i)},\ldots,\tilde{a}^m_{\sigma(i)}})$.\; \label{alg:step5-3}

}

\BlankLine
$\mathcal{D}'\gets\brc{x'_1,\ldots,x'_n}$.\;

\Return $\mathcal{D}'$.\;

\caption{Randomized Quantization for Text}
\label{alg:main}
\end{algorithm}
\normalsize

\begin{enumerate}
\item Approximate the joint distribution over the attribute types of interest,
$\probof{\prop_1, \prop_2, \ldots, \prop_m}$, as
$\probof{\prop_1}
\cdot\prod_{i=2}^m \probof{\prop_i|\prop_{\pi\bra{i}}}$
using the Chow--Liu algorithm~\cite{chow1968approximating} on an auxiliary public dataset $\mathcal{D}_{\text{aux}}$ (\cref{alg:chowliu}).
Here, $\prop_{\pi\bra{i}}$ denotes the parent of $\prop_i$ in the Chow--Liu tree, and $\prop_1$ is the root node. \label{step1}
    
\item Construct $\gamma$ candidate release distributions per distribution component, %
namely $\probof{\prop_1}$ and $\probof{\prop_i|\prop_{\pi\bra{i}}}$ for all $i\in\brc{2,3,\ldots,m}$ (\cref{alg:step2}). We defer the construction algorithm to \cref{app:candidate_distribution}. \label{step2}

\item Compute the empirical distributions from the original private dataset: $\hat{\probnotation}\bra{\prop_1}$ and $\hat{\probnotation}\bra{\prop_i|\prop_{\pi\bra{i}}}$ for all $i\in\brc{2,3,\ldots,m}$ (\cref{alg:step3}). %
\label{step3}

\item Select release distributions for the distribution components, combine them into a released joint distribution, and construct target attribute combinations: \label{step4} 
\begin{itemize}  
\item For each distribution component, determine the released distribution
$\tilde{\probnotation}\bra{\prop_1}$ or
$\tilde{\probnotation}\bra{\prop_i|\prop_{\pi\bra{i}}}$,
$i\in\brc{2,3,\ldots,m}$ by (a) identifying the top-$k$ candidate distributions with the smallest total variation (TV) distances from the corresponding empirical private distribution (\cref{alg:step4-1a}), and (b) selecting one of them uniformly at random (\cref{alg:step4-1b}).

    \item Form the released joint distribution (\cref{alg:step4-2}): $\tilde{\probnotation}\bra{\prop_1, \prop_2, \ldots, \prop_m} = \tilde{\probnotation}\bra{\prop_1} \cdot \prod_{i=2}^m \tilde{\probnotation}\bra{\prop_i|\prop_{\pi\bra{i}}}
    $.
    \item Draw $n$ target attribute combinations from the released joint distribution, where $n$ is the number of private samples (\cref{alg:step4-3}). Each combination specifies one category value per attribute type of interest.
\end{itemize}

\item Rewrite samples to match the released distribution using attribute-related snippets
from the original text: \label{step5}
\begin{itemize} 
\item Label the attribute combination of each private sample via an LLM (\cref{alg:step5-1}), and map the private samples to the target attribute combinations using the Hungarian algorithm (\cref{alg:step5-12,alg:step5-13}). 
\item For each sample, extract snippets relevant to its mapped target attribute combination using an LLM (\cref{alg:step5-2}). 
\item Rewrite each sample with an LLM according to the mapped target attribute combination and the extracted snippets (\cref{alg:step5-3}). 
\end{itemize}
\end{enumerate}

\subsection{Idealized Privacy Guarantee}
\label{sec:theory}
To analyze the Statistic Maximal Leakage (SML) of \name{}, following prior work~\cite{shen2017style,fu2018style,john2019disentangled}, we decompose textual data along two axes: content $Y\in\mathcal{Y}$  and style $Z\in\mathcal{Z}$ with $\calY \cup \calZ = \calX$. Content captures the semantic meaning of the text, including attribute-level information, whereas style refers to non-semantic surface properties, such as word choice, phrasing, and text length. Global-level secrets are functions of the private dataset and may be correlated with content; through content--style dependence, style may therefore also reveal information about the secret.

We quantify the dependence of style on content using the worst-case conditional probability ratio, denoted by $\influence\bra{\content; \style}$.
\begin{align}
    \influence\bra{\content; \style} \triangleq \sup_{y_1, y_2 \in \contentset; z \in \styleset} \frac{\probof{\style = z | \content = y_1}}{\probof{\style = z | \content = y_2}}.
\end{align}
Intuitively, $\influence\bra{\content; \style}$ measures how much the content $Y$ can change the likelihood of a particular style $Z$. When $\influence\bra{\content; \style}=1$, the style is independent of the content.
We show that $\influence\bra{\content; \style}$ upper bounds the mutual information $I\bra{\content; \style}$.

\begin{proposition}
If $\influence\bra{\content; \style} \leq l$, then $I\bra{\content; \style} \leq \log l.$ 
\end{proposition}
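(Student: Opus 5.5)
The plan is to write the mutual information as an average of pointwise log-likelihood ratios and bound each ratio uniformly by $l$. Recall that
\begin{align*}
I\bra{\content;\style} = \sum_{y,z} \probof{\content=y,\style=z}\,\log \frac{\probof{\style=z\mid \content=y}}{\probof{\style=z}}.
\end{align*}
If $\content$ or $\style$ is continuous, the sums become integrals and the probabilities become densities, with the same argument. So it suffices to show that, for every pair $(y,z)$ with positive probability,
\begin{align*}
\frac{\probof{\style=z\mid\content=y}}{\probof{\style=z}} \le l .
\end{align*}

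The key step is to express the marginal as a mixture over content:
\begin{align*}
\probof{\style=z}=\sum_{y'}\probof{\content=y'}\,\probof{\style=z\mid\content=y'}.
\end{align*}
The hypothesis $\influence\bra{\content;\style}\le l$ gives, for every $y'$,
\begin{align*}
\probof{\style=z\mid\content=y'} \ge \frac{1}{l}\,\probof{\style=z\mid\content=y}.
\end{align*}
Averaging this inequality over $y'$ with weights $\probof{\content=y'}$ yields
\begin{align*}
\probof{\style=z} \ge \frac{1}{l}\,\probof{\style=z\mid \content=y}.
\end{align*}
Rearranging gives exactly the pointwise bound above.

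It follows that every log term in the expression for $I\bra{\content;\style}$ is at most $\log l$. Since the weights $\probof{\content=y,\style=z}$ sum to one, the whole sum is at most $\log l$, which proves the proposition. As a consistency check, $I\bra{\content;\style} \ge 0$ forces $l\ge 1$, which already holds because the ratio in the definition of $\influence\bra{\content;\style}$ equals $1$ when $y_1=y_2$.

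The main obstacle is handling degenerate cases, such as a conditional probability $\probof{\style=z\mid \content=y_2}$ equal to zero. If some $z$ has zero probability under one content but positive probability under another, the supremum is infinite and the bound holds trivially. Otherwise, such $z$ have zero probability under every content value, so they contribute nothing to the sum and can be dropped. In the continuous case the same argument applies to densities, using the usual convention that pairs of measure zero are ignored.
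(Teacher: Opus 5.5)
The paper states this proposition without proof (the appendix only proves \cref{thm:sml}), so there is no argument of the authors' to compare against. Your proof is correct and complete as written.

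Writing the marginal $\mathbb{P}(Z=z)$ as a mixture of the conditionals $\mathbb{P}(Z=z\mid Y=y')$, and averaging the lower bound $\mathbb{P}(Z=z\mid Y=y')\ge \mathbb{P}(Z=z\mid Y=y)/l$ over $y'$, gives a uniform bound on the information density. Your handling of zero-probability styles closes the only possible gap: either the supremum is infinite, or the style value is a null set that drops out of the sum.

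You have in fact proved more than the proposition asks. The uniform bound $\mathbb{P}(Z=z\mid Y=y)\le l\,\mathbb{P}(Z=z)$ bounds the max-divergence between $\mathbb{P}_{YZ}$ and $\mathbb{P}_Y\otimes\mathbb{P}_Z$ by $\log l$. Summing it over $z$ gives $\sum_z\max_y\mathbb{P}(Z=z\mid Y=y)\le l$, i.e., the maximal leakage from content to style is also at most $\log l$.

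That summed form is exactly the inequality the paper uses, without comment, in the last step of its proof of \cref{thm:sml}, where $\sum_{z}\sup_{g}\mathbb{P}_{Z|Y}(z\mid y_g)$ is replaced by $l$. Your pointwise lemma is therefore the version of the statement that the privacy analysis actually relies on; mutual information itself plays no role there.
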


We assume that LLM used in \name{} can generate samples that satisfy the requirements specified in the prompts.

\begin{assumption}
[LLM Capability] 
\label{assumption:llm}
LLM rewrite faithfully realizes the assigned attribute combination: for every private
sample $x$, LLM $f$, extracted snippets $s$, and target combination
$a$, the rewritten sample
$x'=\textsc{Rewrite}_f(x,s,a)$ satisfies
$\mathfrak b(x', a)=1$, where the oracle $\mathfrak b$ outputs 1 if and only if a sample has property combination $a$. 
\end{assumption}

The following theorem characterizes the SML of \name{}.

\begin{theorem}[SML of \name{}]
For any secret defined as the proportion of samples associated with particular attribute categories in the private dataset $\mathcal{D}$, if $\influence\bra{\content; \style} \leq l$, then under \cref{assumption:llm}, the SML of \name{} satisfies
\vspace{-1mm}
\begin{align}
\label{eqn:sml}
    \Pi_{\mathcal{M}, \mathfrak{g}} \leq \log \bra{\frac{\gamma}{k}}^{1+\sum_{i=2}^m {\propsize_{\pi\bra{i}}}} + \log l,
\end{align}
where $\prop_{\pi\bra{i}}$ denotes the parent of $\prop_i$ in the constructed Chow--Liu tree, $\propsize_{\pi\bra{i}}$ is the number of values attribute $\prop_{\pi\bra{i}}$ can take, %
$\gamma$ is the number of candidate distributions for each distribution component, %
and $k$ is the selection size.
\label{thm:sml}
\end{theorem}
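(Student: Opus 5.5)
We will bound the SML through its standard operational characterization: since the attacker's best guess maximizes the posterior, $\probof{\hat G = G}\le \sum_{o}\sup_{\secretrv}\probof{O=o\mid G=\secretrv}\,\sup_{\secretrv}\mathbb{P}_G(\secretrv)$. This gives
\begin{align*}
\Pi_{\mathcal{M},\mathfrak g}\le \log \sum_{o}\sup_{\secretrv\in\secretvalueset}\probof{O=o\mid G=\secretrv},
\end{align*}
which is maximal leakage from $G$ to the released output $O$. The supremum is over data priors, so we will in fact bound $\sum_o \sup_{\calD}\probof{\calD'=o\mid \calD}$ uniformly. The released dataset is produced by a chain. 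First, the private $\calD$ determines the empirical components $\hat{\mathbb{P}}$. Second, the mechanism randomly selects the collection $\tilde{\mathbb{P}}$ of component distributions. Third, it draws target attribute combinations. Finally, the LLM rewrites the samples, producing content $Y'$ (the attribute-level information) and style $Z'$. Our plan is to split the leakage into a quantization term coming from the selection of $\tilde{\mathbb{P}}$ and a style term coming from $Z'$.

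\textbf{Step 1 (quantization term).} For a single component (either $\probof{\prop_1}$ or one conditional $\probof{\prop_i\mid\prop_{\pi(i)}=v^{\pi(i)}_j}$), the selected $\tilde{\mathbb P}$ is uniform over the $k$ nearest of $\gamma$ fixed candidates, which were built from $\calD_{\mathrm{aux}}$ and not from $\calD$. Hence each candidate $q$ satisfies $\sup_{\calD}\probof{\tilde{\mathbb P}=q\mid\calD}\le 1/k$, and summing over the $\gamma$ candidates gives at most $\gamma/k$. There are $1+\sum_{i=2}^m\propsize_{\pi(i)}$ components, and given $\calD$ each is selected with independent randomness. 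The joint selection probability therefore factorizes, and since the supremum of a product is at most the product of suprema, the sum over tuples of $\sup_\calD$ is at most $(\gamma/k)^{1+\sum_{i}\propsize_{\pi(i)}}$.

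\textbf{Step 2 (content is post-processing).} Given $\tilde{\mathbb P}$, the targets $\tilde a_j$ are drawn i.i.d.\ from $\tilde{\mathbb P}$ independently of $\calD$. By \cref{assumption:llm}, the rewritten samples realize exactly these combinations. So the attribute-level content of $\calD'$, i.e.\ the multiset of attribute combinations $Y'$, is a randomized function of $\tilde{\mathbb P}$ alone. The Hungarian matching only permutes which private sample receives which target, and this does not change the released multiset. By the post-processing property of SML~\cite{wang2024statistic}, releasing $Y'$ leaks no more than releasing $\tilde{\mathbb P}$.

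\textbf{Step 3 (style term, the main obstacle).} The released style $Z'$ can depend on the private text, for example through the snippets and the phrasing of $x_i$, and hence on $\calD$ beyond $\tilde{\mathbb P}$. We factor $\probof{O=(y',z')\mid\calD}=\probof{Y'=y'\mid\calD}\probof{Z'=z'\mid Y'=y',\calD}$. We then model the rewritten style as governed by the content--style dependence: the dependence of $z'$ on $\calD$ enters only through the private content $y$, so the factor is $\probof{Z=z'\mid Y=y}$. By the definition of $\influence(\content;\style)$, this satisfies $\sup_{y}\probof{Z=z'\mid Y=y}\le l\,\probof{Z=z'\mid Y=y_0}$ for any fixed reference content $y_0$. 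Substituting this and summing over $z'$ (the reference conditional sums to $1$) bounds the sum over outputs by $l\cdot(\gamma/k)^{1+\sum_i\propsize_{\pi(i)}}$. Taking logarithms gives \eqref{eqn:sml}.

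The delicate part is justifying the modeling in Step 3. We must argue that, conditional on the released content, the only channel from $\calD$ to the output is the style variable, and that the ratio bound applies at the dataset level rather than only per sample. Otherwise one would pick up a factor of $l^n$. Our first attempt will be to treat $Y,Z$ as the dataset-level content and style, so that $\influence$ is defined on the whole dataset. This is how the statement's single $\log l$ should be read.
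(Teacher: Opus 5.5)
Your architecture matches the paper's. It uses the maximal-leakage form of SML, counts $\gamma^{C}$ candidate tuples, each selected with probability at most $k^{-C}$ where $C=1+\sum_{i=2}^m\propsize_{\pi(i)}$, treats the released content as post-processing, and bounds the style term by $\sum_{z}\sup_{y}\probof{\style=z\mid\content=y}\le l$.

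\textbf{The gap.} The problem is the reduction you commit to at the outset: bounding $\sum_o\sup_{\calD}\probof{\calD'=o\mid\calD}$ uniformly over the whole private dataset. That discards the data prior, but the hypothesis $\influence(\content;\style)\le l$ is a constraint on the prior; the paper splits $\sup_{\mathcal P}$ into $\sup_{\probnotation_{Z\mid Y}}\sup_{\probnotation_{Y}}$. Once $\calD$ is fixed, so is its style. If the rewrite carries the private style over, as your own Step 3 says it does via snippets and phrasing, then $Z'$ is a deterministic function of $\calD$. In that case $\sum_{z'}\sup_{\calD}\probof{Z'=z'\mid\calD}=|\styleset|$, and your route certifies $\log|\styleset|$ where the theorem has $\log l$.

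The modeling sentence in Step 3 hides this. It sets $\probof{Z'=z'\mid Y'=y',\calD}=\probof{\style=z'\mid\content=y}$, but under your conditioning that factor is a point mass, not the prior conditional. Nothing in the mechanism makes the substitution true. Reading $\content,\style$ at the dataset level is indeed what the paper implicitly does, and it rightly avoids $l^n$, but it does not fix this.

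\textbf{The repair.} Take the worst case only over private content. Since $G$ is a function of the dataset-level content, $\sup_{\secretrv}\probof{O=o\mid G=\secretrv}\le\sup_{y}\probof{O=o\mid\content=y}$. Given $\content=y$, the private style is still random with law $\probof{\style=\cdot\mid\content=y}$. The selected $\tilde{\mathbb P}$, and hence the released content $Y'$, depends on $\calD$ only through its attribute labels, so $Y'$ is conditionally independent of $\style$ given $\content$. With the released style equal to the private one, $\probof{O=(y',z)\mid\content=y}=\probof{Y'=y'\mid\content=y}\,\probof{\style=z\mid\content=y}$ then holds exactly rather than as an assumption. Your Steps 1 and 2 go through verbatim with $y$ in place of $\calD$, and summing over $z$ gives the factor $l$. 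This is the paper's route: a worst-case prior with deterministic $\mathbb{P}_{Y\mid G}$ (the $y_{\secretrv}$), keeping $\probnotation_{Z\mid Y}$ inside the sum.

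One caveat applies equally to the paper's proof. Your Step 2 remark on the Hungarian matching is correct for the multiset of released contents. However, each target is released together with the style of the private sample it was matched to, and $\sigma$ depends on the private labels. That content--style pairing is a further channel, which writing the output as $(y',z)$ does not capture.
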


(Proof in \cref{app:proof_sml}.) \cref{thm:sml} shows that \name{} provides a stronger privacy guarantee, i.e., a smaller SML value, when {the number of candidate distributions}  $\gamma$ {per conditional attribute} is smaller, {the number of nearest-neighbor distributions} $k$ is larger, or the parent attributes in the Chow--Liu tree have fewer categories. This is consistent with the intuition that, when the mechanism selects from a larger fraction of the candidate distributions and each relevant attribute has fewer possible categories, the released distribution reveals less information for the attacker to use when inferring the secret. Furthermore, content--style correlation increases the privacy loss bound additively by $\log l$.

Since SML admits an operational interpretation, \cref{thm:sml} directly bounds the attacker’s success probability.

\begin{proposition}[Best attack success rate]
\label{prop:asr}
Let $\alpha$ denote the optimal attack success rate using prior knowledge alone. After observing the dataset released by \name{}, the optimal attack success rate satisfies
$$
\mathbb{P}(\hat{G}=G) \leq \alpha \cdot \bra{\frac{\gamma}{k}}^{1+\sum_{i=2}^m {\propsize_{\pi\bra{i}}}}.
$$
\end{proposition}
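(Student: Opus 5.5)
The plan is to obtain \cref{prop:asr} from the operational meaning of the SML definition in \eqref{eq:privmetric}, combined with the bound of \cref{thm:sml}. Fix an arbitrary data prior $\mathcal{P}$ and an arbitrary attack $\mathcal{A}$ that observes the released dataset $\calD'=\calM(\calD)$. By definition, $\Pi_{\mathcal{M},\mathfrak{g}}$ is a supremum over all $(\mathcal{P},\mathcal{A})$. So for this particular pair,
\begin{align*}
\log \frac{\probof{\hat{G}=G}}{\sup_{\secretrv\in\secretvalueset}\mathbb{P}_G\bra{\secretrv}} \le \Pi_{\mathcal{M},\mathfrak{g}},
\quad\text{that is,}\quad
\probof{\hat{G}=G}\le \sup_{\secretrv}\mathbb{P}_G\bra{\secretrv}\, e^{\Pi_{\mathcal{M},\mathfrak{g}}}.
\end{align*}

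The second step identifies $\alpha$. An attacker who uses prior knowledge alone outputs a guess $\hat{G}$ that is independent of $G$. Its success probability is then $\sum_{\secretrv}\probof{\hat{G}=\secretrv}\mathbb{P}_G(\secretrv)\le\sup_{\secretrv}\mathbb{P}_G(\secretrv)$, and the MAP guess attains this value. Hence $\alpha=\sup_{\secretrv}\mathbb{P}_G(\secretrv)$, which is exactly the denominator in \eqref{eq:privmetric}. Substituting gives $\probof{\hat{G}=G}\le\alpha\,e^{\Pi_{\mathcal{M},\mathfrak{g}}}$. This holds for every attack, so it holds for the optimal one.

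The third step exponentiates the bound of \cref{thm:sml}. This yields $e^{\Pi_{\mathcal{M},\mathfrak{g}}}\le l\cdot(\gamma/k)^{1+\sum_{i=2}^m\propsize_{\pi(i)}}$.

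The main obstacle is the extra factor $l$, which does not appear in the displayed statement. I would handle it in one of two ways. The first is to read \cref{prop:asr} in the idealized regime of \cref{thm:sml} where style carries no information about content, i.e., $\influence\bra{\content;\style}=1$. Then $l=1$ and the bound is exactly as stated. If one does not want to assume this, the honest general form is $\alpha\, l\,(\gamma/k)^{1+\sum_{i}\propsize_{\pi(i)}}$.

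The second way avoids $l$ by a direct argument. It requires that the attacker observe only the released attribute distribution, equivalently that style has been fully replaced by the rewrite. Under \cref{assumption:llm}, the released attributes are a post-processing of the per-component choices $\tilde{\probnotation}$. There are $N=1+\sum_{i=2}^m\propsize_{\pi(i)}$ components, each chosen uniformly from a top-$k$ subset of $\gamma$ candidates, so every output tuple has probability at most $k^{-N}$ under any $\calD$. Summing the maximum likelihood over the at most $\gamma^N$ possible tuples gives a maximal-leakage bound of $(\gamma/k)^N$. Post-processing, via the property of SML from \cite{wang2024statistic}, then transfers this bound to $\calD'$.

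I would present the first route and note the factor $l$ explicitly when style leakage is present.
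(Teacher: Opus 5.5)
Your proposal is correct and follows the paper's own argument: the paper obtains \cref{prop:asr} directly from \cref{thm:sml} via the operational reading of SML given in the problem formulation (any attack succeeds with probability at most $\alpha e^{\Pi_{\mathcal{M},\mathfrak{g}}}$, where $\alpha=\max_{g}\mathbb{P}_G(g)$ is the MAP success rate), and your second route is just the first half of the paper's proof of \cref{thm:sml} (probability at most $k^{-N}$ per selected tuple, $\gamma^{N}$ tuples, then post-processing). You are also right to flag the factor $l$: the displayed bound follows only in the idealized case where style is independent of content, i.e.\ $L(Y;Z)=1$ (the case the paper's proof writes as ``$L(Y;Z)=0$''), and for general $l$, \cref{thm:sml} only gives $\alpha\, l\,(\gamma/k)^{1+\sum_{i=2}^m\propsize_{\pi(i)}}$.
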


%% file: sections/experiment.tex
\section{Experiments}

We evaluate the privacy and utility of \name{} on real-world datasets.

\subsection{Datasets} 
We use the Tweet Stance \cite{StanceSemEval2016} and ChatDoctor \cite{li2023chatdoctor} datasets, and treat them as the real private data to be released and protected. %

\begin{itemize}
    \item \textbf{Tweet Stance Dataset \cite{StanceSemEval2016}~} This dataset contains 4,870 tweets annotated with three labels: \textit{Target} (five political topics: climate change, atheism, legalization of abortion, Hillary Clinton, and Donald Trump), \textit{Stance} (Favor, Against, Neither), and \textit{Sentiment} (Positive, Negative, Neither). We subsample 3,000 tweets as the private dataset and use the remaining 1,870 samples as the auxiliary dataset for the Chow--Liu spanning tree. We consider \textit{Target}, \textit{Stance}, and \textit{Sentiment} as the attributes of interest, yielding the Chow--Liu spanning tree \{\textit{Target} $\rightarrow$ \textit{Stance}; \textit{Target} $\rightarrow$ \textit{Sentiment}\}.
    We define the secrets as the proportions of tweets in a specific \textit{Target}, \textit{Stance}, or \textit{Sentiment} category, and in a specific \textit{Target}–\textit{Stance} combination. All secrets are specified with a precision of $0.01\%$.

    \item \textbf{ChatDoctor Dataset \cite{li2023chatdoctor}~} This dataset consists of patient--doctor dialogues. Following \cite{huang2025can}, we define two types of secrets: (1) the proportion of female samples and (2) the proportions of specific medical diagnoses. For (1), we subsample three datasets of 600 dialogues each with target female ratios 0.3, 0.5, and 0.7, taking gender as the only attribute of interest. For (2), we subsample 600 dialogues, define the secrets as the proportions of samples with mental disorder, digestive disorder, and childbirth, each to a precision of $0.01\%$, and take diagnosis as the only attribute of interest. In both cases, the single attribute of interest reduces the Chow--Liu tree to a single node.

\end{itemize}

\begin{figure*}[htbp]
    \centering
\begin{subfigure}{0.48\textwidth}
         \centering    \includegraphics[width=0.81\linewidth]{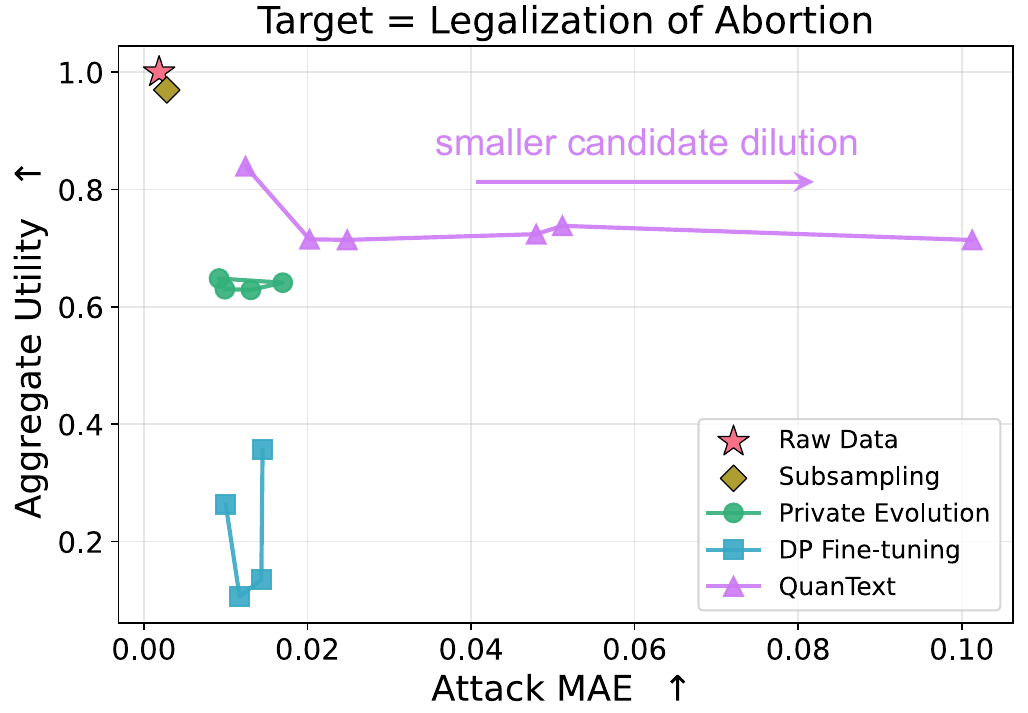}
    \caption{Secret as the proportion of Target = Legalization of Abortion.}
    \label{fig:pareto_target}
\end{subfigure}
\begin{subfigure}{0.48\textwidth}
         \centering
    \includegraphics[width=0.81\linewidth]{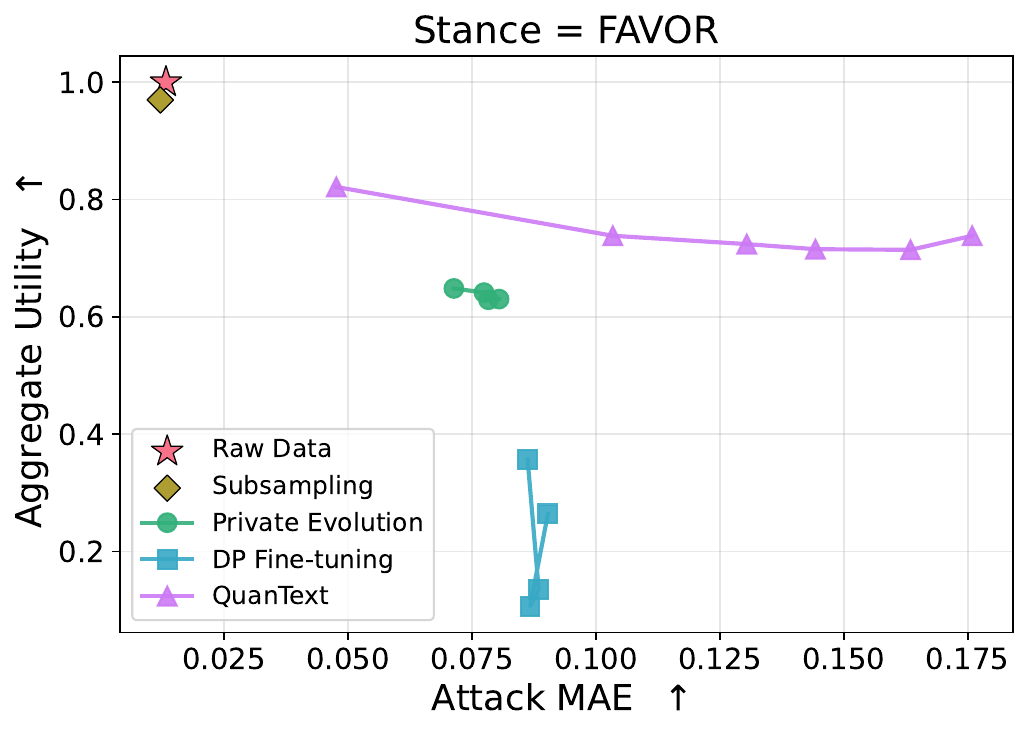}
    \caption{Secret as the proportion of Stance = Favor}
    \label{fig:pareto_stance}
\end{subfigure}
\begin{subfigure}{0.48\textwidth}
         \centering
    \includegraphics[width=0.81\linewidth]{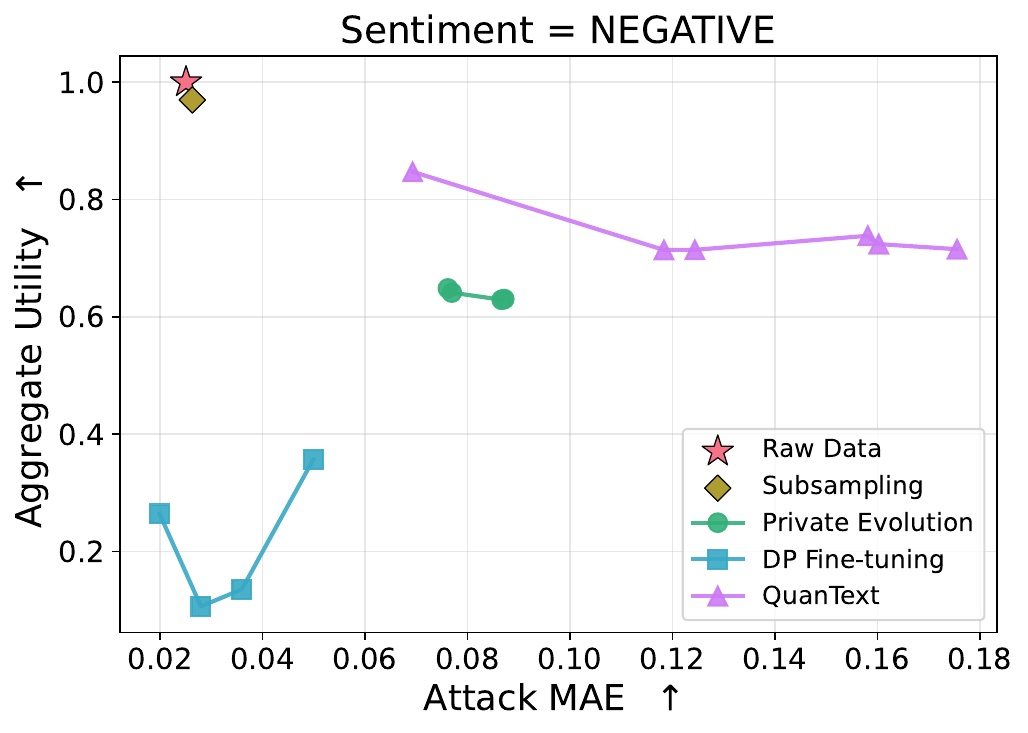}
    \caption{Secret as the proportion of Sentiment = Negative.}
    \label{fig:pareto_sentiment}
\end{subfigure}
\begin{subfigure}{0.48\textwidth}
         \centering
    \includegraphics[width=0.81\linewidth]{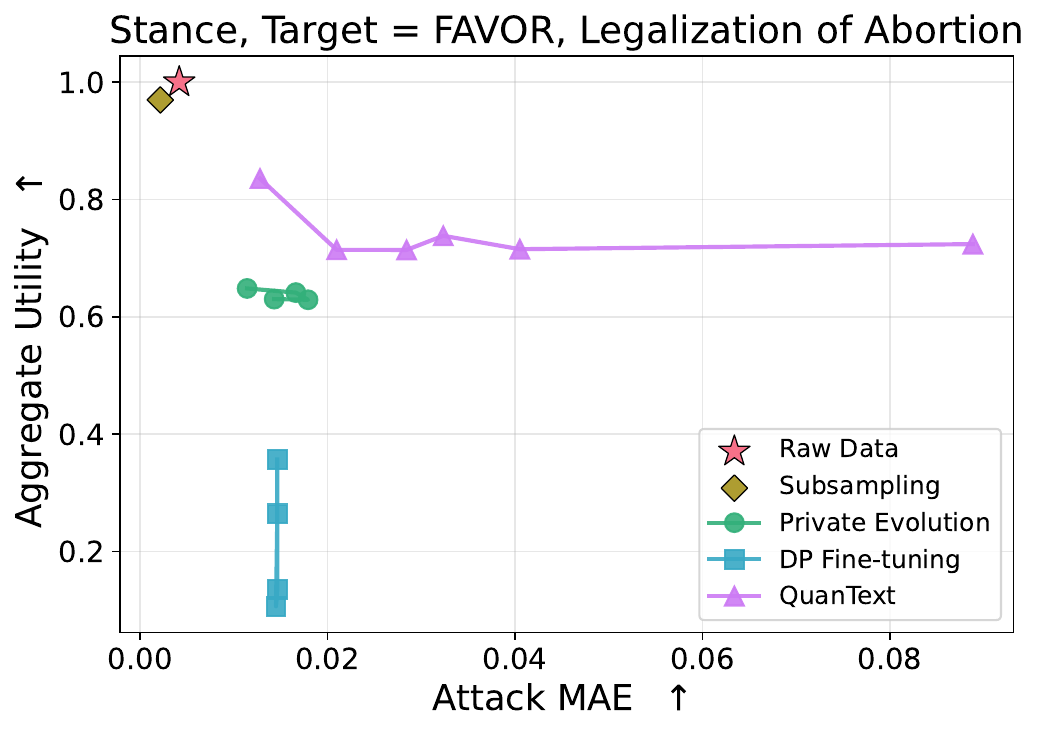}
    \caption{Secret as the proportion of Stance = Favor \& Target = Legalization of Abortion.
    }
    \label{fig:pareto_combined}
\end{subfigure}
\caption{Attack MAE vs. aggregate utility under different data generation methods with varying privacy guarantees on Tweet. 
}
\label{fig:pareto_tweet}
\vspace{-2mm}
\end{figure*}

\subsection{\name{} and Data Generation Baselines}  

As shown in \cref{thm:sml}, for a given dataset,  the SML of \name{} is determined by $\frac{\gamma}{k}$ under \cref{assumption:llm}. Intuitively, $\frac{\gamma}{k}$ measures the {degree of quantization among candidate distributions, coupled with how closely we stay in the neighborhood of the true empirical attribute distribution.} 
We refer to this ratio as \textbf{candidate dilution}. Although \cref{assumption:llm} may not hold exactly in practice, we vary the candidate dilution, i.e., $\frac{\gamma}{k}$, to control the privacy performance of \name{} in our experiments, where smaller candidate dilution corresponds to stronger privacy. Specifically, we consider $\frac{\gamma}{k} \in \brc{2,3,4,6,8,12}$ and use \texttt{Llama3.1-8B-Instruct} as the backend model. %

As {we do not know of any other} defenses specifically {against} target property inference attacks in textual {data release} settings, we select Differentially Private (DP) synthetic data generation methods as our baselines, specifically, Private Evolution \cite{lin2023differentially,xie2024differentially,lin2025differentially} and DP model fine-tuning \cite{yu2021differentially,wutschitz2022dp,yue2022synthetic}. We also include raw data release and subsampling as  naive baselines.

\begin{itemize}
\item \textbf{Raw Data Release:} We release the private data directly.

\item \textbf{Subsampling:} We randomly subsample and release half of the private dataset.

\item \textbf{Private Evolution (PE) \cite{lin2023differentially,xie2024differentially,lin2025differentially,tran2026differentially}:} PE is a training-free method for differentially private synthetic data generation using foundation models \cite{lin2023differentially,xie2024differentially,lin2025differentially,tran2026differentially,hou2024pre,gong2025dpimagebench}. Starting from samples generated by a Random API, PE iteratively constructs a DP noisy histogram from private-to-synthetic nearest-neighbor votes, samples from this histogram, and perturbs selected samples via a Variation API. We use Augmented Private Evolution (Aug-PE) \cite{xie2024differentially}, the text-generation variant of PE, modifying only the Random and Variation prompts.

\item \textbf{DP Fine-Tuning (DP-FT) \cite{yu2021differentially,wutschitz2022dp,yue2022synthetic}:} DP-FT fine-tunes the language model for next-token prediction using differentially private stochastic gradient descent (DP-SGD) \cite{abadi2016deep}. Synthetic data are then generated from the fine-tuned model according to a generation instruction.
\end{itemize}

We run PE for 10 iterations and DP-FT for 15 epochs. For both PE and DP-FT, we vary the privacy budget $\epsilon \in {1,2,3,4}$, %
and use \texttt{Llama3.1-8B-Instruct} as the backend model.

\begin{figure*}[htbp]
    \centering
\begin{subfigure}{0.48\textwidth}
         \centering
    \includegraphics[width=0.81\linewidth]{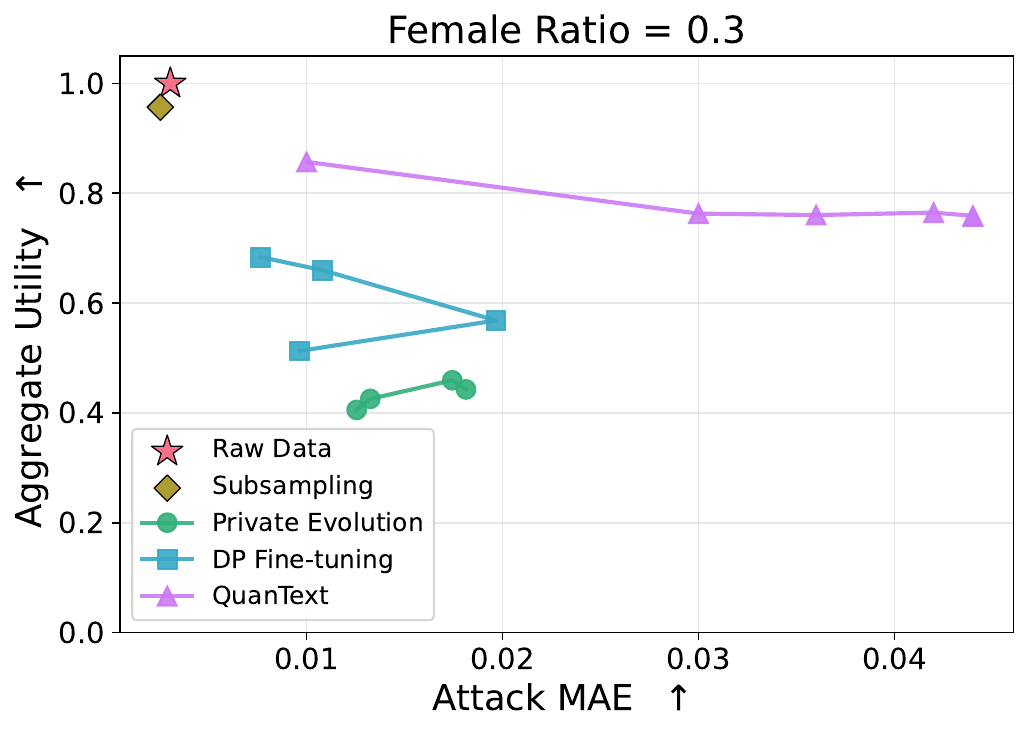}
    \caption{Secret as Female Ratio = 0.3.}
    \label{fig:pareto_gender3}
\end{subfigure}
\begin{subfigure}{0.48\textwidth}
         \centering
    \includegraphics[width=0.81\linewidth]{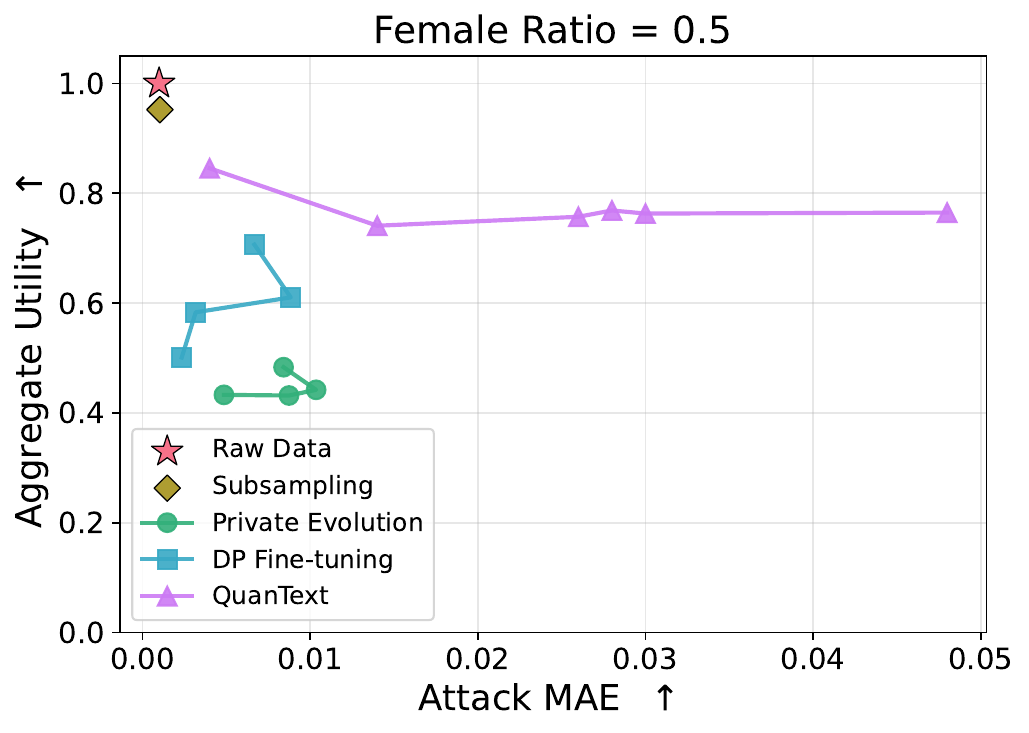}
    \caption{Secret as Female Ratio = 0.5.}
    \label{fig:pareto_gender5}
\end{subfigure}
\begin{subfigure}{0.48\textwidth}
         \centering
    \includegraphics[width=0.81\linewidth]{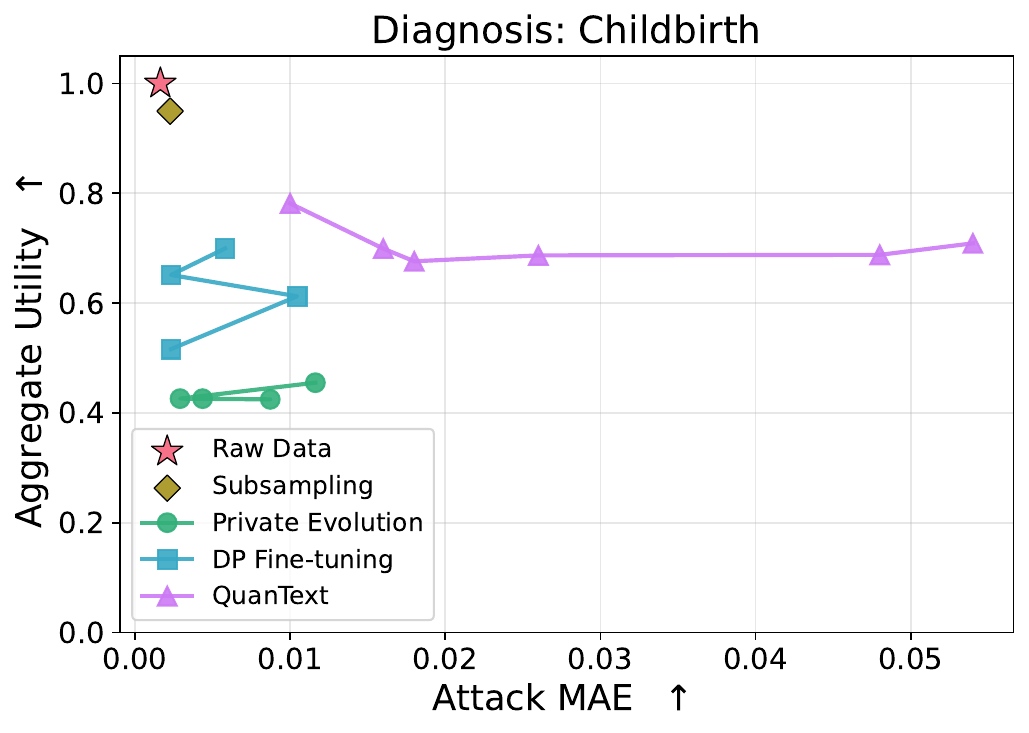}
    \caption{Secret as the proportion of Diagnosis = Childbirth.}
    \label{fig:pareto_diagnosis_childbirth}
\end{subfigure}
\begin{subfigure}{0.48\textwidth}
         \centering
    \includegraphics[width=0.81\linewidth]{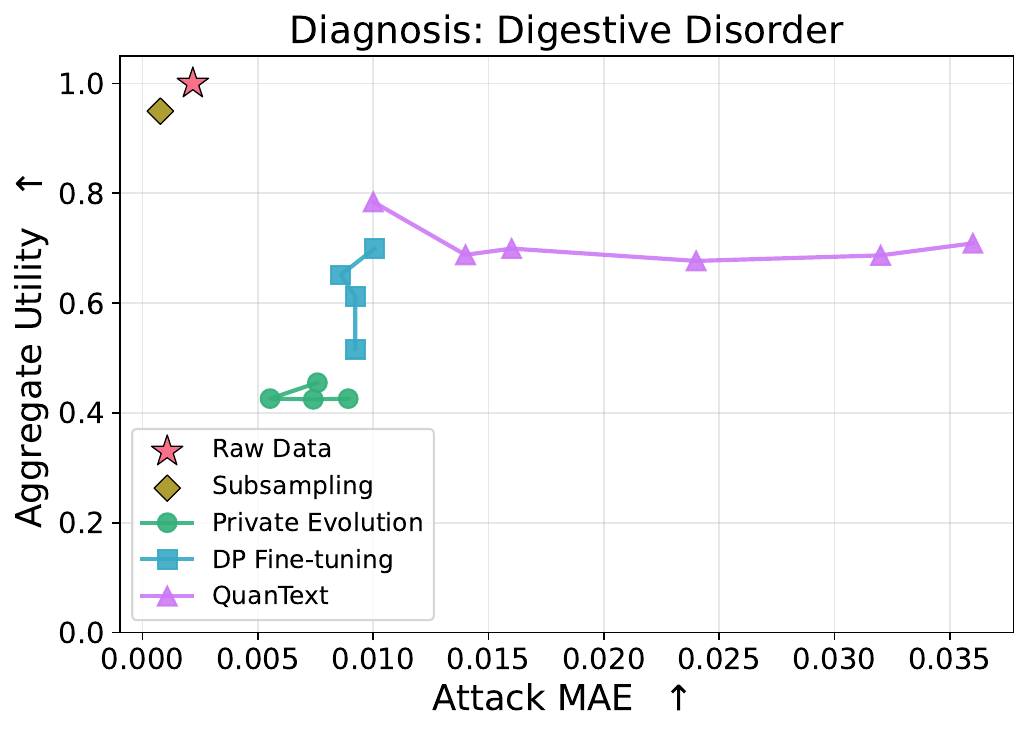}
    \caption{Secret as the proportion of Diagnosis = Digestive Disorder.
    }
    \label{fig:pareto_diagnosis_digestive}
\end{subfigure}
\caption{Attack MAE vs. aggregate utility under different data generation methods with varying privacy guarantees on ChatDoctor. %
}
\label{fig:pareto_chatdoctor}
\vspace{-2mm}
\end{figure*}

\subsection{Evaluation Metrics}

\textbf{Privacy:} 
Although our theoretical arguments suggest that \name satisfies an SML guarantee in idealized settings, it is unclear how to estimate the correlation parameter $l$ from \cref{thm:sml}. Hence, we evaluate an empirical privacy measure that can also be evaluated for the other baseline defenses we consider: 
we calculate the Mean Absolute Error (MAE) of the labeling-based property inference attack used in \cite{hilprecht2019_property_gans,prisampler2024, huang2025can}, described as follows. Higher MAE indicates better performance on protecting global secrets.

\begin{itemize}
\item \textbf{Labeling-based Attack~} For each sample, the attack uses a large language model to decide whether the sample belongs to the sensitive attribute category. The property ratio is then computed as the number of samples in the sensitive category divided by the total number of samples.
\end{itemize}

In our experiments, we use \texttt{Llama3.1-8B-Instruct}, \texttt{Mistral-7B-Instruct}, and \texttt{Qwen3-4B-Instruct} as the backend models for the labeling-based attack.

Although \cite{huang2025can} proposes another property inference attack for text, it targets models trained on private or generated data, where the trained model may encode information about the global secrets. This makes it unsuitable for our setting, in which \name{} is training-free and the attacker infers the secrets directly from the generated dataset.

\textbf{Utility:} Inspired by \cite{wang2026struct,wang2026synae} on evaluating synthetic data, we adopt \knnprecision{}, \knnrecall{}, Fréchet Inception Distance, and Attribute Matching to evaluate the semantic and statistic performance of the generated data. \textbf{\knnprecision{}} and \textbf{\knnrecall{}} capture the semantic quality and coverage of the generated data. 
\knnprecision{} and \knnrecall{} are defined as the proportions of generated and real samples, respectively, whose embedding distance to at least one sample from the opposite dataset is smaller than the distance to the $k$-th nearest neighbor within their own dataset.
\textbf{Fréchet Inception Distance} (\fid{}) measures the embedding closeness of the real and generated data. \textbf{Attribute Match} (\am{}) quantifies the agreement between real and synthetic datasets with respect to predefined statistical and semantic features by measuring distances between their corresponding feature distributions. Specifically, it uses the Wasserstein-2 distance for numerical features and Total Variation (TV) distance for categorical features. In our evaluation, we use sample token length as the statistical feature, together with dataset-specific semantic features.

We summarize overall utility as the average of the four metrics above, each rescaled to $[0,1]$ so that larger values indicate better performance. Although averaging these values may not be ideal because the underlying metrics can have different scales, this practice is sometimes adopted in benchmarks to facilitate visualization and interpretation \cite{wang2023decodingtrust,wang2026synae}.

\begin{figure*}[htbp]
    \centering
\begin{subfigure}{0.48\textwidth}
         \centering
    \includegraphics[width=0.81\linewidth]{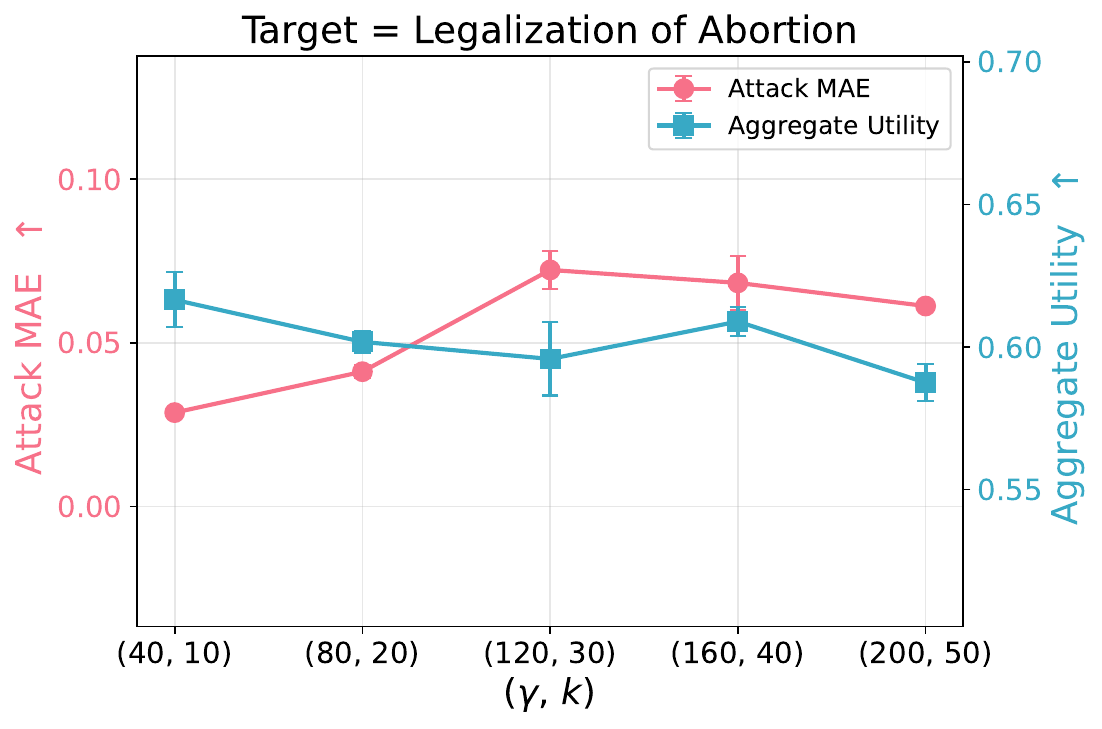}
    \caption{Secret as the proportion of Target = Legalization of Abortion.}
    \label{fig:pareto_target_fix}
\end{subfigure}
\begin{subfigure}{0.48\textwidth}
         \centering
    \includegraphics[width=0.81\linewidth]{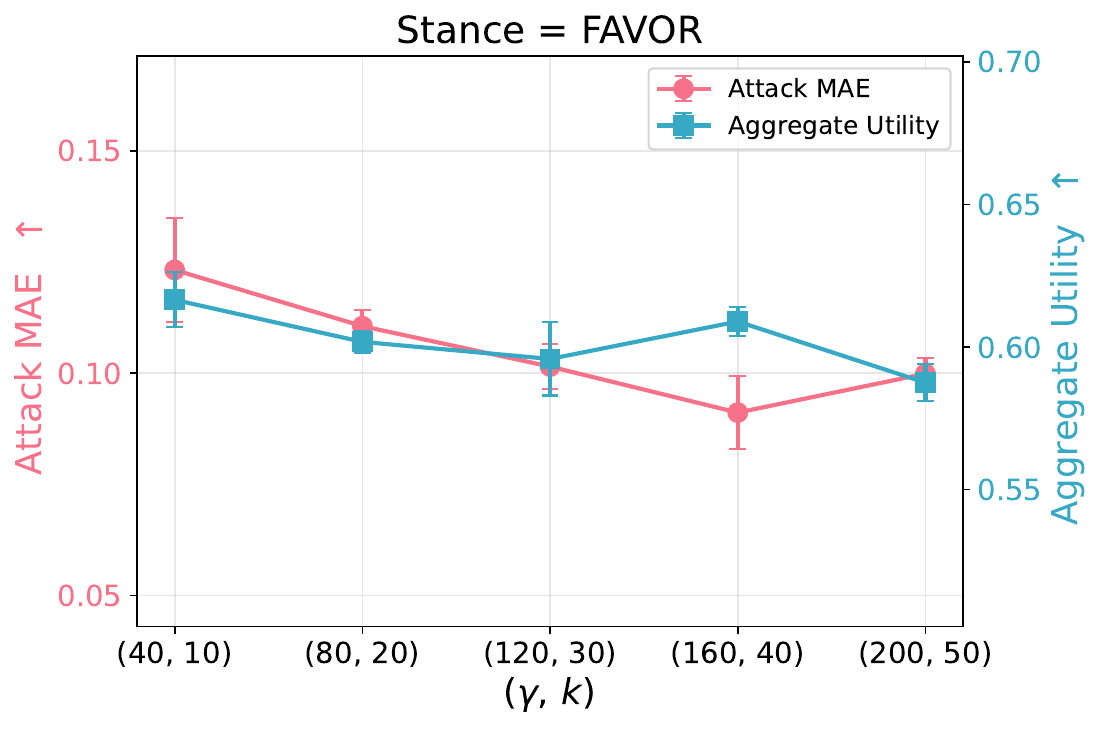}
    \caption{Secret as the proportion of Stance = Favor.}
    \label{fig:pareto_stance_fix}
\end{subfigure}
\begin{subfigure}{0.48\textwidth}
         \centering
    \includegraphics[width=0.81\linewidth]{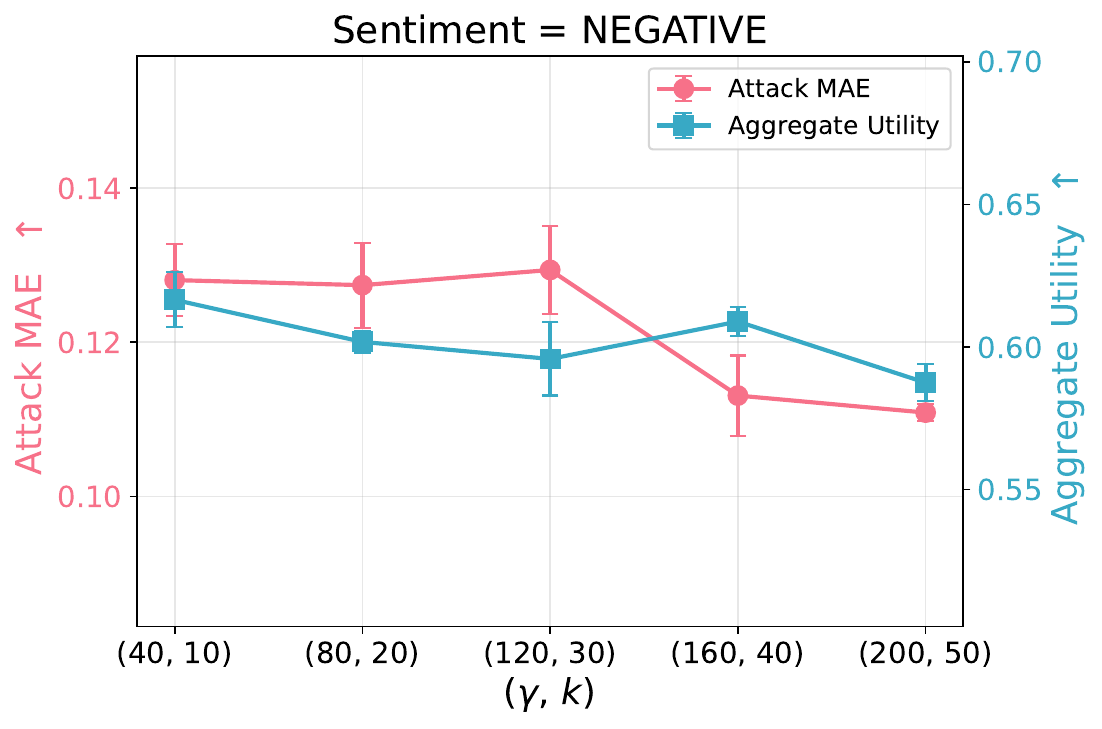}
    \caption{Secret as the proportion of Sentiment = Negative.}
    \label{fig:pareto_sentiment_fix}
\end{subfigure}
\begin{subfigure}{0.48\textwidth}
         \centering
    \includegraphics[width=0.81\linewidth]{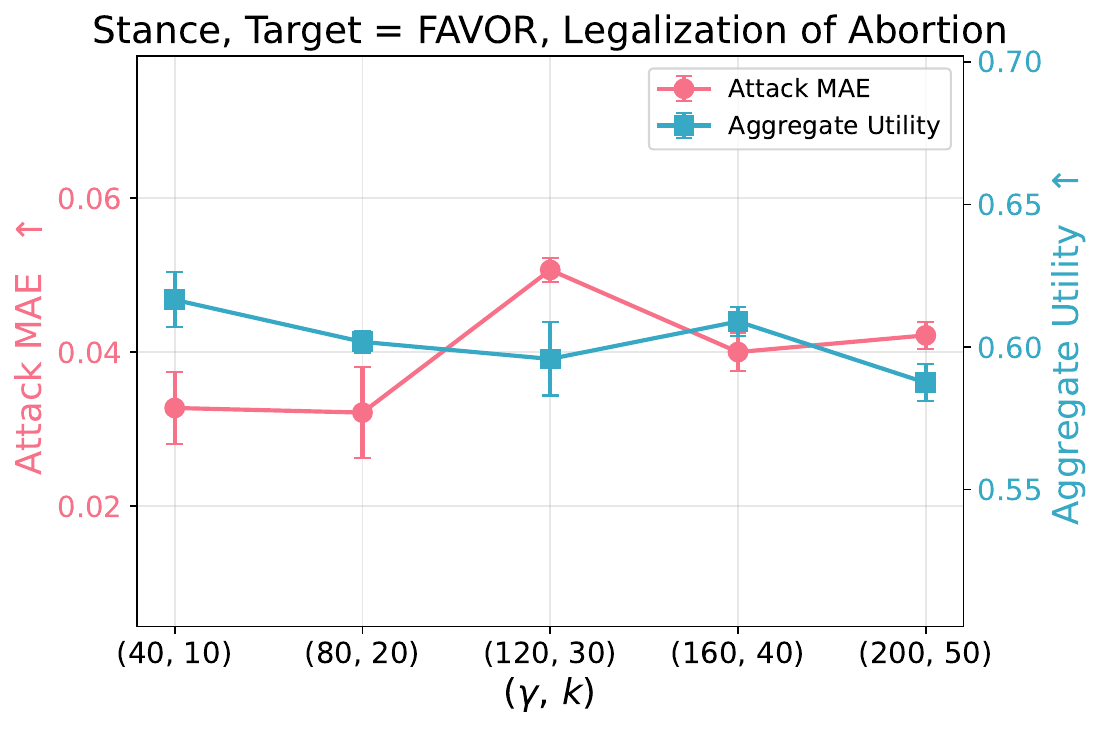}
    \caption{Secret as the proportion of Stance = Favor \& Target = Legalization of Abortion.
    }
    \label{fig:pareto_combined_fix}
\end{subfigure}
\caption{Attack MAE \& aggregate utility of \name{} with ${\gamma}/{k} = 4$ on Tweet. 
}
\label{fig:fix_dilution}
\vspace{-2mm}
\end{figure*}

\subsection{Results}

We present the privacy and utility performance of \name{} and the baselines under varying privacy budgets for selected secrets on the Tweet Stance and ChatDoctor datasets in \cref{fig:pareto_tweet,fig:pareto_chatdoctor}, %
which highlight two main takeaways:
\begin{itemize}
\item \textbf{\name{} achieves a better privacy–utility trade-off than PE and DP-FT.~%
}
As shown in \cref{fig:pareto_tweet,fig:pareto_chatdoctor}, when the candidate dilution satisfies $\frac{\gamma}{k} \leq 8$, %
\name{} consistently achieves both higher attack MAE and a higher aggregate utility score than PE and DP-FT across privacy budgets $\epsilon \in \brc{1,2,3,4}$, %
indicating superior privacy and utility performance. For both PE and DP-FT, the attack MAE remains consistently low and varies only slightly across different DP budgets $\epsilon$, which is consistent with prior observations that data generation methods with differential privacy guarantees are insufficient to protect global-level properties \cite{wang2024statistic,lin2024summary,wang2024guarding}. We also observe that the attack MAE of Raw Data Release and Subsampling remains near zero, indicating that property inference attack succeeds on these naive baselines.

\item \textbf{Smaller candidate dilution $\frac{\gamma}{k}$ improves privacy while preserving utility for \name{}.~} 
As shown in \cref{fig:pareto_target} and consistently observed across the other subplots in \cref{fig:pareto_tweet,fig:pareto_chatdoctor}, decreasing the candidate dilution from $12$ to $2$ substantially improves privacy: the attack MAE increases from approximately $0.01$ to $0.1$. At the same time, this change incurs only a minor utility loss, with the aggregate utility score decreasing from $0.85$ to $0.77$. Intuitively, this is because \name{} primarily adjusts the proportions of selected attributes, while the reused attribute-related snippets help preserve the semantic quality and coverage of the generated samples. A similar phenomenon has been observed in tabular data generation with SML guarantees \cite{wang2024statistic}. These results indicate that smaller candidate dilution provides a more favorable privacy--utility trade-off in practice.
\end{itemize}

\subsubsection{Ablation Studies}
We further study how \name{} performs under different parameter choices and
against different attacker models.

\paragraph{Sensitivity analysis of parameters $\gamma$ and $k$ under fixed candidate dilution}
In \cref{fig:fix_dilution}, we fix the candidate dilution at $\frac{\gamma}{k}=4$ and vary $\gamma$ and $k$ proportionally to examine whether their individual values affect the privacy and utility performance of \name{} on the Tweet Stance dataset. For each parameter setting, we run \name{} three times and report the mean and standard deviation of both privacy and utility. We observe that, across different types of secrets, both the attack MAE and aggregate utility remain largely stable as $\gamma$ and $k$ vary. This demonstrates that \name{} is robust to the specific choices of $\gamma$ and $k$, provided that the candidate dilution is fixed.

\paragraph{Robustness of \name{} to attacks from different backend models}
To examine whether \name{} is robust to attacks performed by different models, we consider three backend models: \texttt{Llama3.1-8B-Instruct}, \texttt{Mistral-7B-Instruct}, and \texttt{Qwen3-4B-Instruct}. %
For each backend model, we perform the labeling-based attack for all secrets on Tweet Stance datasets generated by \name{} with $\frac{\gamma}{k}=4$, PE and DP-FT with $\epsilon=1$, Raw Data Release, and Subsampling. For each secret, we rank these methods according to their attack MAE. We then compute the Spearman correlation between the rankings produced by each pair of backend models for each secret and average the correlations across secrets. The resulting averaged Spearman correlation matrix is shown in \cref{tbl:correlation}. We observe that the Spearman correlation between any pair of models exceeds 0.6, indicating a strong correlation~\cite{evans1996straightforward}. This suggests that the relative protection provided by each method is largely preserved across attack models, implying that the privacy advantage of \name{} is not tied to any particular attack model.

\begin{table}[htbp]
\centering
\caption{Averaged Spearman correlation matrix between backend models. For each backend model and secret, we rank the generation methods by their attack MAE, compute the Spearman correlation between the rankings for each pair of models, and average the correlations across secrets.}
\label{tbl:correlation}
\begin{tabular}{cccc}
\toprule
 & \texttt{Llama} & \texttt{Mistral} & \texttt{Qwen} \\
\midrule
\texttt{Llama}   & 1.0000 & 0.7766 & 0.6878 \\
\texttt{Mistral} & 0.7766 & 1.0000 & 0.7724 \\
\texttt{Qwen}    & 0.6878 & 0.7724 & 1.0000 \\
\bottomrule
\end{tabular}
\end{table}

%% file: sections/conclusion.tex
\section{Conclusion}

In this work, we studied privacy-preserving textual data generation for protecting sensitive global properties, specifically the proportions of samples associated with specified attribute categories. We proposed \name{}, a model-agnostic, training-free mechanism that protects global secrets while limiting leakage through correlated attributes and preserving data utility. \name{} efficiently constructs candidate release distributions, randomly selects one close to the private empirical distribution, and rewrites samples using attribute-related snippets to match the chosen distribution. We analyzed its privacy guarantee using Statistic Maximal Leakage and characterized the privacy degradation when textual style is correlated with the secret. Experiments on real-world datasets show that \name{} achieves a favorable privacy--utility trade-off and outperforms differentially private data generation baselines in both privacy and utility.

%% file: references.bbl

%% file: sections/app_candidate_distribution.tex
\section{Candidate Distribution Construction}
\label{app:candidate_distribution}

We specify the candidate distribution construction algorithm in \cref{alg:candidate_distribution}. Specifically, for each distribution component $\mathbb{P}$, we draw a pool of $L$ distributions from a symmetric Dirichlet distribution with concentration parameter $\alpha$, where the default choice $\alpha=1$ yields the uniform Dirichlet distribution. We then select $\gamma$ distributions from this pool to form the candidate set, with the goal of making the selected candidate distributions as far apart as possible. To this end, we adopt a greedy farthest-point selection rule: at each step, we choose the distribution that maximizes its minimum TV distance to the set of candidates selected so far. The time complexity of \cref{alg:candidate_distribution} is $O(\gamma^2 L)$.

\begin{algorithm}[htbp]
\small
\DontPrintSemicolon
\SetAlgoLined
\SetKwInput{KwIn}{Input}
\SetKwInput{KwOut}{Output}

\KwIn{
Distribution component $\mathbb{P}$; Dirichlet concentration $\alpha$; pool size $L$; number of candidate distributions $\gamma$.
}
\KwOut{Candidate distribution set $\mathcal{Q}$ with size $\gamma$.}

\BlankLine
\tcp{Pool sampling}
Initialize pool $\mathcal{L}\gets\varnothing$.\;
\For{$\ell \gets 1$ \KwTo $L$}{
Draw $\bar{\mathbb{P}}^{(\ell)} \sim \mathrm{Dirichlet}_{\mathrm{sym}}(\alpha)$
over the support of $\mathbb{P}$.\;
$\mathcal{L}\gets \mathcal{L}\cup{\bar{\mathbb{P}}^{(\ell)}}$.\;
}

\BlankLine
\tcp{Greedy selection of $\gamma$ candidates}
Pick an arbitrary $q_1\in\mathcal{L}$ and set $\mathcal{Q}\gets{q_1}$.\;
\For{$t \gets 2$ \KwTo $\gamma$}{
$q^\star \gets
\arg\max_{p\in \mathcal{L}\setminus\mathcal{Q}}
\min_{q\in \mathcal{Q}} d_{\mathrm{TV}}(p,q)$.\;
$\mathcal{Q}\gets \mathcal{Q}\cup{q^\star}$.\;
}

\BlankLine
\Return $\mathcal{Q}$.\;

\caption{Candidate Distribution Construction}
\label{alg:candidate_distribution}
\end{algorithm}
\normalsize

%% file: sections/app_proof_sml.tex
\section{Proof of \cref{thm:sml}}
\label{app:proof_sml}

\begin{proof}
Let $\mathcal{M}'$ be the data release mechanism consisting only of the first four steps of \name{}. By \cite{wang2024statistic}, we have
\begin{align*}
\Pi_{\mathcal{M}', \mathfrak{g}} = \sup_{\mathbb{P}_{\Theta|G}\in\brc{0,1}}\log \sum_{\theta'\in\mathbf{\Theta}'} \sup_{\secretrv\in \secretvalueset} \mathbb{P}_{\Theta'|\Theta}\bra{\theta'|\theta_\secretrv},
\end{align*}
where $\Theta$ and $\Theta'$ denote the distribution parameters of the private and released data respectively, and $\theta_\secretrv$ satisfies $\mathbb{P}_{\Theta|G}\bra{\theta_\secretrv|\secretrv}=1$. Since the number of distribution components is $1+\sum_{i=2}^m \brd{\prop_{\pi\bra{i}}}$, we have $\brd{\mathbf{\Theta}'}=\gamma^{1+\sum_{i=2}^m \brd{\prop_{\pi\bra{i}}}}$ and 
\begin{align*}
\mathbb{P}_{\Theta'|\Theta}\bra{\theta'|\theta}\leq \bra{\frac{1}{k}}^{1+\sum_{i=2}^m \brd{\prop_{\pi\bra{i}}}}, \quad\forall \theta\in\mathbf{\Theta}, \theta'\in\mathbf{\Theta}'.
\end{align*}
Hence, when $L\bra{Y;Z} = 0$, we can get that
\footnotesize
\begin{align*}
\Pi_{\mathcal{M}', \mathfrak{g}} &\leq \sup_{\mathbb{P}_{\Theta|G}\in\brc{0,1}}\hspace{-1mm}\log \sum_{\theta'\in\mathbf{\Theta}'} \hspace{-1mm}\bra{\frac{1}{k}}^{1+\sum_{i=2}^m \brd{\prop_{\pi\bra{i}}}}
\hspace{-1mm}= \log \bra{\frac{\gamma}{k}}^{1+\sum_{i=2}^m \brd{\prop_{\pi\bra{i}}}}.
\end{align*}
\normalsize

Since SML satisfies post-processing and, by \cref{assumption:llm}, the attribute-related snippets reveal no information about the secret, the SML of \name{} satisfies
\begin{align*}
    \Pi_{\mathcal{M}, \mathfrak{g}} \leq \log \bra{\frac{\gamma}{k}}^{1+\sum_{i=2}^m \brd{\prop_{\pi\bra{i}}}}.
\end{align*}

When $L\bra{Y;Z} \leq l$, under \cref{assumption:llm}, we can get that
\footnotesize
\begin{align*}
&\Pi_{\mathcal{M}, \mathfrak{g}}=\sup_{\mathcal{P}, \mathcal{A}}\log \frac{\probof{\hat{G}=G}}{\sup_{\secretrv\in \secretvalueset} \mathbb{P}_G\bra{\secretrv}} \quad
= 
\sup_{\probnotation_{Z|Y}}\sup_{\probnotation_{Y}, \mathcal{A}}\log \frac{\probof{\hat{G}=G}}{\sup_{\secretrv\in \secretvalueset} \mathbb{P}_G\bra{\secretrv}}\\
&= 
\sup_{\probnotation_{Z|Y}}\sup_{\mathbb{P}_{Y|G}\in\brc{0,1}}\log \sum_{\theta'\in\mathbf{\Theta}'} \sup_{\secretrv\in \secretvalueset} \mathbb{P}_{\Theta'|Y}\bra{\theta'|y_\secretrv}\\
&\leq \sup_{\probnotation_{Z|Y}}\sup_{\mathbb{P}_{Y|G}\in\brc{0,1}}\log \sum_{y'\in \mathcal{Y}', z\in \mathcal{Z}}\sup_{\secretrv\in \secretvalueset}\mathbb{P}_{Z|Y}\bra{z|y_\secretrv}\cdot\bra{\frac{1}{k}}^{1+\sum_{i=2}^m \brd{\prop_{\pi\bra{i}}}}\\
&\leq \sup_{\probnotation_{Z|Y}}\sup_{\mathbb{P}_{Y|G}\in\brc{0,1}}\log \sum_{y'\in \mathcal{Y}'}l\cdot\bra{\frac{1}{k}}^{1+\sum_{i=2}^m \brd{\prop_{\pi\bra{i}}}}\\
&= \log \bra{\frac{\gamma}{k}}^{1+\sum_{i=2}^m \brd{\prop_{\pi\bra{i}}}} + \log l,
\end{align*}
\normalsize
where $y_\secretrv$ satisfies $\mathbb{P}_{Y|G}\bra{y_\secretrv|\secretrv}=1$, and $\mathcal{Y}'$ denotes the set of released content parameter vectors.
\end{proof}